\documentclass{article}
\usepackage{graphicx}
\usepackage{placeins}
\usepackage[utf8]{inputenc}
\usepackage{hyperref}
\usepackage{amsmath}
\usepackage{amsthm}
\usepackage{amsfonts}
\usepackage{amssymb}
\usepackage[dvipsnames]{xcolor}
\usepackage{cleveref}
\usepackage{listings}
\usepackage{makecell}
\usepackage[margin = 3cm]{geometry}
\usepackage{enumerate}
\usepackage{mathtools}

\usepackage[style=numeric, labelnumber, defernumbers=true,  backend=biber, url=false, maxbibnames=99]{biblatex}
\usepackage{tikz} 
\usepackage{pgfplots}
\usepackage{pgfplotstable}
\usetikzlibrary{shapes, backgrounds, patterns}
\pgfplotsset{compat=1.18}
\usetikzlibrary{arrows.meta, positioning, calc}
\usepgfplotslibrary{groupplots}

\definecolor{tue_red}{HTML}{C81919}
\definecolor{tue_dark_blue}{HTML}{101073}
\definecolor{tue_blue}{HTML}{0066CC}
\definecolor{tue_cyan}{HTML}{00A2DE}
\definecolor{tue_green}{HTML}{84D200}
\definecolor{tue_yellow}{HTML}{CEDF00}

\definecolor{accent}{gray}{0.95}

\definecolor{color1}{RGB}{0, 121, 178}
\definecolor{color2}{RGB}{255, 124, 37}
\definecolor{color3}{RGB}{37, 160, 55}
\definecolor{color4}{RGB}{220, 32, 44}
\definecolor{color5}{RGB}{147, 104, 186}
\definecolor{color6}{RGB}{143, 85, 76}
\definecolor{color7}{RGB}{230, 119, 192}
\definecolor{color8}{RGB}{127, 127, 127}
\definecolor{color9}{RGB}{192, 188, 55}
\definecolor{color10}{RGB}{0, 191, 206}

\definecolor{orange}{HTML}{E69F00}
\definecolor{cyan}{HTML}{56B4E9}
\definecolor{green}{HTML}{009E73}
\definecolor{yellow}{HTML}{F0E442}
\definecolor{blue}{HTML}{0072B2}
\definecolor{red}{HTML}{D55E00}
\definecolor{purple}{HTML}{CC79A7}

\definecolor{ForestGreen}{RGB}{34,139,34}

\definecolor{cblindred}{RGB}{216, 27, 96}
\definecolor{cblindblue}{RGB}{30, 136, 229}
\definecolor{cblindyellow}{RGB}{255, 193, 7}
\definecolor{cblindgreen}{RGB}{0, 77, 64}

\newtheoremstyle{noparens}
  {0pt}{0pt}   
  {\itshape}    
  {}            
  {\bfseries}   
  {.}           
  { .5em}       
  {\thmname{#1}\thmnumber{ #2}\thmnote{ #3}} 
\theoremstyle{noparens}

\newtheorem{remark}{Remark}[section]
\newtheorem{example}{Example}[section]

\usepackage[framemethod = TikZ]{mdframed}

\newcommand{\makefancybox}[4]{%

    \newenvironment{#1}[1][]{%
        \refstepcounter{#2}%

        \def\mytitle{%
            #3 \csname the#2\endcsname%
            \if\relax\detokenize{##1}\relax%
            \else%
                : ##1%
            \fi%
        }%

        \mdfsetup{
            linecolor = #4,
            linewidth = 0.25pt,
            topline = true,
            bottomline = true,
            leftline = false,
            rightline = false,
            nobreak=true,
            innertopmargin = 4pt,       
            innerbottommargin = 4pt,     
            frametitleaboveskip = 4pt,   
            frametitlebelowskip = 4pt,   
            frametitlerule = false,      
            frametitlebackgroundcolor = #4!20,
            frametitlefont = \bfseries,
            frametitle = \mytitle,
        }

        \begin{mdframed}
    }{%
        \end{mdframed}
    }%

    \crefname{#1}{#3}{#3s}
}

\makefancybox{definition}{definitionT}{Definition}{cblindred}
\makefancybox{lemma}{lemmaT}{Lemma}{cblindblue}
\makefancybox{proposition}{propositionT}{Proposition}{cblindgreen}

\newcommand{\cL}{\mathcal{L}}
\newcommand{\cT}{\mathcal{T}}
\newcommand{\cM}{\mathcal{M}}
\newcommand{\cN}{\mathcal{N}}

\newcommand{\cP}{\mathcal{P}}

\newcommand{\cX}{\mathcal{X}}

\newcommand{\R}{\mathbb{R}}

\newcommand{\dd}{\mathrm{d}}

\DeclareMathOperator*{\argmin}{arg\,min}

\newcommand{\colorV}{\color{black}}
\newcommand{\colorM}{\color{black}}

\newcommand{\colorT}{\color{black}}

\newcommand{\metric}{\rho}

\renewcommand{\u}{{\colorV u}}
\newcommand{\params}{\theta}
\newcommand{\Dp}{{\colorM D(\params)}}
\newcommand{\w}{{\colorM w}}
\renewcommand{\v}{{\colorM v}}

\newcommand{\paramk}{\theta^{(k)}}
\newcommand{\paramkm}{\theta^{(k-1)}}
\newcommand{\paramkp}{\theta^{(k+1)}}

\newcommand{\yk}{y^{(k)}}

\newcommand{\wk}{w^{(k)}}

\newcommand{\glossk}{{\nabla L(\paramk)}}

\newcommand{\vk}{{\colorM v^{(k)}}}
\newcommand{\vkm}{{\colorM v^{(k-1)}}}
\newcommand{\vkp}{{\colorM v^{(k+1)}}}

\newcommand{\dvdt}{\frac{\partial \v}{\partial t}}

\newcommand{\psik}{\psi^{(k)}}
\newcommand{\psikm}{\psi^{(k-1)}}

\newcommand{\psikT}{\psi^{(k)T}}
\newcommand{\psikmT}{\psi^{(k-1)T}}

\newcommand{\psiki}{{\colorM \psi^{(k)}_i}}
\newcommand{\psikj}{{\colorM \psi^{(k)}_j}}

\newcommand{\Qk}{{Q^{(k)}}}

\newcommand{\phik}{{\colorM \phi^{(k)}}}

\newcommand{\pk}{p^{(k)}}
\newcommand{\pkm}{p^{(k-1)}}

\newcommand{\Pk}{{\colorT \cP^{(k)}}}
\newcommand{\Pkm}{{\colorT \cP^{(k-1)}}}

\newcommand{\dPdt}{\frac{\partial \cP}{\partial t}}

\newcommand{\gk}{g^{(k)}}

\newcommand{\Gk}{{G^{(k)}}}

\newcommand{\Ginvk}{{G^{(k)\dagger}}}

\newcommand{\GXk}{{G^{(k)}_X}}

\newcommand{\GXinvk}{{G^{(k)\dagger}_X}}

\newcommand{\GXkkm}{{G^{(k,k-1)}_X}}

\newcommand{\cTk}{{\cT_{\paramk}}}
\newcommand{\cTkm}{{\cT_{\paramkm}}}

\newcommand{\Kmax}{\kappa^{(k)}_{\text{max}}}

\newcommand{\Hk}{{H^{(k)}}}

\newcommand{\dDdpi}{{{\colorT \frac{\partial D}{\partial \params_i}}}}
\newcommand{\dDdpj}{{\colorT \frac{\partial D}{\partial \params_j}}}
\newcommand{\dDdp}{{{\colorT \frac{\partial D}{\partial \params}}}}

\newcommand{\GradP}{\nabla}
\newcommand{\LossGrad}{\GradP L}
\newcommand{\LossVGrad}{{\colorV \nabla \cL}}

\newcommand{\gradM}{\operatorname{grad}_{\cM}}

\newcommand*\samethanks[1][\value{footnote}]{\footnotemark[#1]}

\title{Natural gradient descent with momentum}
\author{Anthony Nouy\thanks{Centrale  Nantes, Nantes Université,  Laboratoire de Mathématiques Jean Leray UMR CNRS 6629 (anthony.nouy@ec-nantes.fr and agustin.somacal@ec-nantes.fr)}~ and Agustin Somacal\samethanks[1]}
\date{}

\begin{document}

\maketitle

\begin{abstract}
We consider the problem of approximating a function by an element of a nonlinear manifold which admits a differentiable parametrization, typical examples being neural networks with differentiable activation functions or tensor networks.
    Natural gradient descent (NGD) for the optimization of a loss function can be seen as a preconditioned gradient descent where updates in the parameter space are driven by a functional perspective. In a spirit similar to Newton’s method, a NGD step uses, instead of the Hessian, the Gram matrix of the generating system of the tangent space to the approximation manifold at the current iterate, with respect to a suitable metric. This corresponds to a locally optimal update in function space, following a projected gradient onto the tangent space to the manifold.
    Still, both gradient and natural gradient descent methods  get stuck in local minima. Furthermore, when the model class is a nonlinear manifold  or the loss function is not ideally conditioned (e.g., the KL-divergence for density estimation, or a  norm of the residual of a partial differential equation in physics informed learning), even the natural gradient might yield non-optimal directions at each step. This work introduces a natural version of classical inertial dynamic methods like Heavy-Ball or Nesterov and show how it can improve the  learning process when working with nonlinear model classes.
\end{abstract}

\section{Introduction}

Optimization strategies based on gradient descent (GD) have become the standard method to train models in modern machine learning applications. Since its first introduction by \cite{cauchyMethodeGeneralePour1847} many improvements have been made to understand its theoretical capabilities and limits, accelerate its convergence rates or lower its computational cost. However, GD looks at the optimization process from the parameters perspective disregarding the fact that ultimately it is the functions they instantiate that we want to update in the direction of steepest descent. To correct this  bias in the GD update \cite{amariNeuralLearningStructured1996,amariNaturalGradientWorks1998} proposed to take into account the geometry of the model class (the approximation manifold) introducing the notion of natural gradient descent (NGD). Since then NGD has been proven to be effective in multiple domains: online learning, blind source separation as in the seminal works of \cite{amariNeuralLearningStructured1996,amariNaturalGradientWorks1998}, in reinforcement learning \cite{petersNaturalActorCritic2008}, deep neural network training \cite{parkAdaptiveNaturalGradient2000,martensOptimizingNeuralNetworks2020} and more recently in physics informed learning \cite{müllerAchievingHighAccuracy2023,schwenckeANaGRAMNaturalGradient2024,jniniGaussNewtonNaturalGradient2024}, and  dynamical low rank approximation \cite{bonStableNonlinearDynamical2025a}.

Although the NGD update gives the steepest direction of descent from the functional perspective this is only true locally as once we take a finite step we might deviate quite a lot from the optimal update due to the nonlinearities of the model class or the loss being other than induced by a Hilbertian norm. Furthermore, when the gradient at each iteration is estimated using random samples the estimation errors might amplify even more the deviation from the expected direction. Finally, NGD as GD will also get stuck in the first local minima encountered as no inertia is present in the dynamics. We claim that adding information from previous iterations can effectively address to some extent these issues as they can be used to correct the direction, reduce the variance in the presence of stochasticity and add an inertial term allowing the optimization to escape local minima.

Recently, there have been attempts to incorporate momentum into first order optimizers in the context of Riemannian manifolds \cite{kimNesterovAccelerationRiemannian2022,brantnerGeneralizingAdamManifolds2023,ahnNesterovsEstimateSequence2020,zhangRiemannianAcceleratedGradient2018}. Many of these works define the accelerated dynamics using the exponential and logarithmic maps between the tangent space and the manifold under the rationale that they yield the exact dynamics which the geodesic descent curve suggests. However, in doing so, many of the algorithms become intractable, computationally too expensive or explicit only in the case of very particular manifolds like the Stiefel manifold of orthogonal matrices \cite{absilOptimizationAlgorithmsMatrix2008}. We argue that forcing the dynamics to follow exactly the geodesic is too strong a constraint given that the optimization will be done on successive iterations correcting eventually the deviations. Thus, relaxing this hard constraint we can still get accelerated methods that have comparative computational complexity with NGD and can work in broader situations.

The paper is organized as follows. In \Cref{sec:setting}, we introduce the problem setting and notations. In \Cref{sec:gradient-descent}, we present classical gradient algorithms—gradient descent and Newton's method—and show their relationship with NGD. In particular, in \Cref{sec:flow}, we formulate the functional interpretation of the natural gradient. In \Cref{sec:natural-momentum}, we present various natural versions of momentum strategies. In \Cref{sec:classical-momentum}, we recall classical momentum strategies. In \Cref{sec:heavy-ball}, we show how a natural version of the heavy-ball algorithm can be devised by discretizing the gradient flow  in function space. In \Cref{sec:nesterov}, we propose two variants of a natural Nesterov method. In \Cref{sec:numerics}, we present four numerical examples that showcase the improvements and limitations of the proposed functional versions of momentum methods.
\newline

In \url{https://codeberg.org/akuasoma/Opt4Fun} is hosted the code to reproduce the numerical experiments.

\section{Problem setting}
\label{sec:setting}

We consider a Banach space $V$ of  functions with norm $\Vert \cdot \Vert_V$. In the case of a Hilbert space, we let $({\cdot,\cdot})_V$  define the inner product.
The objective is to approximate a target function $\u$ from $V$ by an element  of a  parametrized model class
$$\cM\coloneqq\{\v=\Dp : \theta \in \R^d\} \subset V,$$
where
$D: \R^d \to V$ is a differentiable map. The  partial derivative $\dDdpi(\theta)$ of $D$ with respect to the $i$-th parameter $\theta_i$, evaluated at $\theta$,  is a function in $V$ that we will write $\psi_i(\theta)$.
The functions $\psi_1(\theta),\dots, \psi_d(\theta)$ generate the tangent space $\cT_{D(\theta)} \subset V$ of the manifold $\cM$ at point $\Dp$ defined by
$$
\cT_{D(\theta)}\mathcal{M} = \operatorname{span} \left\{\psi_i(\theta) =  \dDdpi(\theta) : 1\le i \le d \right \} =: \cT_\theta.
$$
In what follows it will be convenient to denote $$ {\colorT\frac{\partial D}{\partial \theta}}(\theta) \coloneqq\psi(\theta) \coloneqq(\psi_i(\theta))_{1\le i\le d}\in V^d$$
as the tuple of functions such that  an element ${\colorT w} \in \cT_{\theta}$ can be written as
$$
{\colorT w}= \psi(\theta)^T c =\sum_{i=1}^d c_i \psi_i(\theta),
$$
with $c\in\R^d$ the coordinates of $w$ in the generating system of $\cT_{\theta}$.

\vspace{10pt}
\begin{example}
    The  {model class} $\cM$, which is determined by $D$, can be for example:
    \begin{itemize}
        \item a linear subspace of $V$ when $D$ is a linear map from $ \mathbb{R}^d$ to $V$, in which case
        $\Dp=\sum_{j=1}^{d} \theta_j {\colorT \psi_j}$, where the ${\colorT \psi_j} = \dDdpj$ form a basis of  $\cM$. Here the tangent space at any point is equal to the manifold itself.
        \item a shallow neural network defined on $\mathbb{R}^n$, with $k$ neurons, in which case
        $$\Dp(x) = a^T\sigma(A x + b) + c$$
        with $\sigma: \R \to \R$ a differentiable nonlinear activation function (applied entry-wise to a vector), $A\in\R^{k\times n}$ and $a\in\R^{ k}$ the weights, and $b\in\R^{k}$ and $c \in \mathbb{R}$ the biases. The parameter $\theta = \operatorname{vec}(A, a, b, c)$ gathers    the set of weights and biases, so that $d=(n+2)k+1$ and the tangent space at $\theta$ is generated by
        \begin{align*}
        &{\colorT\frac{\partial D}{\partial c}}(\theta)(x)=1,  &{\colorT\frac{\partial D}{\partial a_i}}(\theta)(x)&=\sigma(b_i+A_i^T x),\\
        &{\colorT\frac{\partial D}{\partial b_i}}(\theta)(x)=a_i\sigma'(b_i+A_i^T x),  &{\colorT\frac{\partial D}{\partial A_{ij}}}(\theta)(x)&=a_i\sigma'(b_i+A_i^T x)x_j,
        \end{align*}
        where $A_i^T$ denotes the $i$-th row vector of $A$.
    \end{itemize}
\end{example}

We assume that the target function $\u$ is a  minimizer over $V$ of a
\emph{loss function} $\cL: V\to \R$ defined as
$$\cL(\v)=\int_\cX \ell( \v , x) \dd \mu(x),$$
where $\ell:   V \times \cX \to \R$ is a \emph{point-wise loss} and
 $\cX$ is a set equipped with a measure $\mu$. Note that in most examples, functions in $V$ will be defined on the set $\cX$, but they may be defined on other sets as well.

The \emph{excess loss}  $\cL(\v) - \cL(\u) $ provides a measure of discrepancy between an  approximation $\v$ and the target function $\u$.
The approximation task then consists in  finding ${\colorM v^*} \in \cM$ that minimizes the loss, i.e.
\begin{equation}
    {\colorM v^*} \in \underset{\v\in \cM}{\argmin} \, \cL(\v). \label{min-loss}
\end{equation}

We assume that $\cL$ is Fréchet differentiable, and we denote by $\cL'(\v) : V \to \mathbb{R}$ its Fréchet differential at $\v$. If $V$ is a Hilbert space we can identify $\cL'(\v)$ with the gradient $\LossVGrad(v)\in V$ at $v$ by Riesz representation so that $\cL'(\v)(w) = (\LossVGrad(\v),w)_V$ for any $w\in V$. We denote by $\mathcal{R}_V :V'\to V$ the Riesz map such that $\mathcal{R}_V\cL'(v)=\LossVGrad(\v)$.

Under some integrability and regularity  assumptions  on    the  point-wise loss $\ell$  (see \cite{kammarfrechetdiff2016}), the Leibniz rule
yields
 \begin{align}
\cL'(\v)(\w) &= \int_\cX \ell'(\v, x)(\w) \,  \dd \mu(x) ,
\end{align}
with $\ell'$  the differential of $\ell$ with respect to the first variable.

\vspace{10pt}
\begin{example}\label{ex:examples_loss}
We present several examples of
      {loss} functions $\cL$ and corresponding  {point-wise loss} functions $\ell$.
    \begin{enumerate}[(i)]
        \item Assuming $\ell(\v , x) = \tilde \ell(\v(x) , x)$ with $\tilde \ell : \mathbb{R} \times \cX \to \mathbb{R}$ (i.e. the point-wise loss at $x$ only depends on the  evaluation of $\v$ at $x$), then $\ell'(\v , x) (\w) = \tilde \ell'(\v(x) , x) \w(x)$, so that $\nabla \cL(\v)$ is identified with the function $x\mapsto  \tilde \ell'(\v(x) , x)$ in $V= L^2_\mu(\cX)$, and
\begin{align}
\cL'(\v)(\w)=(\nabla \cL(\v) , \w)_V &= \int_\cX  \tilde \ell'(\v(x), x) \w(x) \,  \dd \mu(x) .
\end{align}
This includes the classical least-squares regression setting with $\tilde \ell(\v(x) , x) = \frac{1}{2}(\v(x) - \u(x))^2$ and $\tilde \ell'(\v(x) , x) = \v(x) - \u(x).$
    \item Assuming $\ell(\v , x) = \frac{1}{2} \Vert L_x \v - L_x \u \Vert_2^2$, where $L_x : V \to \mathbb{R}^p$ is a family of operators indexed on $\cX$, and $V$ a Hilbert space  equipped with the norm
      $\Vert \v \Vert_V^2 = \int_\cX \Vert L_x \v \Vert_2^2 \, \dd \mu(x)$, then $\cL(\v) =\frac{1}{2} \Vert \v - \u \Vert_V^2$, $\cL(u) = 0$,  $\nabla \cL(\v) = \v - \u $, i.e.
      $$
    \cL'(\v)(\w)=(\nabla \cL(\v) , \w)_V = \int_{\cX} L_x (\v - \u)^T L_x(\w) \,  \dd \mu(x) .
      $$
      Typical cases include $V = L^2_\mu(\cX)$ with $L_x \v = \v(x)$, which is again the classical least-squares regression setting, or $V = H^1_\mu(\cX)$ with $L_x \v = (\nabla \v(x)^T \;  \v(x) )^T \in \mathbb{R}^{p+1}$, which is a regression problem using evaluations of a function and its derivatives.

        \item For least-squares density estimation, with $\mu$ the target distribution over $\cX$, with density $u$ with respect to a reference $\nu$, assuming $\u\in V =L^2_\nu(\cX) $,
        $$\ell(\v , x) = \frac{1}{2} \Vert \v \Vert_{L^2_\nu}^2  -  \v(x), \quad \cL(\v) = \frac{1}{2}\Vert \v \Vert_{L^2_\nu}^2 - \int_\cX \v(x) \dd\mu(x).$$
         The gradient of $\cL$ is $\LossVGrad(\v) = \v - \u$ and the excess loss $\cL(\v) - \cL(\u) =  \frac 1 2 \Vert \v - \u \Vert_{L^2_\nu}^2$.
        \item For density estimation using cross-entropy, with $\mu$ the target distribution over $\cX$, with density $\u$ with respect to a reference $\nu$,
        $$\ell(\v, x) = -\log(\v(x)) , \quad \cL(\v) = - \int_\cX \log(\v(x))  \dd\mu(x).$$
        Here  $\cL(\v)$ is the cross-entropy of $\v$ with respect to $\u$ and the excess loss
        $$\cL(\v) - \cL(\u)=\int_\cX  \log\left(\frac{\u(x)}{\v(x)}\right) \u(x)\dd \nu(x) = D_{KL}(\u || \v),$$
        which is the Kullback-Leibler divergence between $\u$ and $\v$. It holds that
        $$\cL'(v)(w)=\int -w(x)\frac{u(x)}{v(x)} \dd\nu(x).$$
        Assuming $u/v$ is in $L^2_\nu$, $\cL'(v)$ can be identified with the gradient $\nabla_{L^2_\nu}\cL(\v)(x) = - \frac{\u(x)}{\v(x)}$ in $L^2_\nu$.
        \item For the solution of an elliptic PDE $- \nabla\cdot (a(x) \nabla u(x) ) = f(x)$ on a bounded  domain $\cX$, with $a$ positive, uniformly bounded and   uniformly bounded away from zero, with homogeneous Dirichlet boundary conditions,
        we consider $V = H^1_0(\cX)$ equipped with the norm $\Vert \v \Vert_V^2 = \int_{\cX} a(x) \Vert \nabla \v(x)\Vert_2^2 \, \dd x$, the point-wise loss
        $$
        \ell(\v ,x) = \frac{1}{2} a(x) \Vert \nabla \v(x)\Vert_2^2 - f(x) \v(x),
        $$
        and the loss
        $$\cL(\v) = \frac{1}{2} \Vert v \Vert_V^2 - \int_\cX f(x) \v(x) \dd x.
        $$
         Here $\cL(\v) - \cL(\u) = \frac{1}{2} \Vert \u- \v \Vert_V^2$, and $\nabla \cL(\v) = \v- \u$.

        \item For the solution of a PDE $A(u)(x) = f(x)$ defined on a bounded domain $\cX$, with boundary conditions, with $A$  a partial differential operator and $f \in L^2(\cX)$, a minimal residual loss is defined by
        $$
         \ell(\v , x) = \vert A(\v)(x) - f(x) \vert^2 , \quad   \cL(\v) = \int_\cX \vert A(\v)(x) - f(x) \vert^2 \dd x,
        $$
        where we assume that
         $A(\u) \in L^2(\cX)$. Letting $V$ be the space of functions $\v$ such that $A(\v) \in L^2(\cX)$ and satisfying boundary conditions, it holds that $\u$ minimizes $\cL$ over $V$ and $\cL(\u) = 0$.
         For linear operators, we can introduce the indexed family of operators $L_x : V\to \mathbb{R} $ such that $L_x \v = A(v)(x)$, and we recover the setting introduced in (ii), with $f(x) = L_x \u(x)$.
         We can also reformulate the problem as a least-squares approximation of $f$ in  $L^2(\cX)$, with a loss defined on $\tilde V = L^2(\cX)$ by
         $$
         \tilde \cL(w) = \int_\cX (w(x) - f(x))^2 \dd x.
         $$
          Approximating $\u$ in the parametrized manifold $\cM \subset V$ corresponds to approximating   $f$ in the operator induced set $\tilde \cM \coloneqq \{w = \tilde D(\theta) : \theta \in \Theta\},$ with $\tilde D = A\circ D$. This reformulation will be used in numerical experiments.
    \end{enumerate}
\end{example}

We then define the parameter-dependent loss  $L: \Theta \to \R$ as the composition
$$
L(\theta) \coloneqq \cL ( D(\theta)),
$$
and  $\nabla L(\theta) \in \R^d$  the gradient of the loss with respect to the parameters, evaluated at $\theta$.
Applying the chain rule we have that
\begin{align}
 \LossGrad(\theta)_i &= \cL'(D(\theta))({\colorT\psi_i}(\theta)), \forall i=1,\dots,d,
\end{align}
which we denote $$\LossGrad(\theta) = \cL'(D(\theta))(\psi(\theta)).$$

In  practice, the loss function is  estimated from available samples or through numerical integration, which leads us to consider an \emph{empirical loss}
\begin{align*}
       \cL_m(\v) \coloneqq  \sum_{j=1}^m w_j\ell(\v , x_j) = \int_{\cX} \ell(\v , x)\, \dd\mu_m(x),
\end{align*}
where the $x_j$ and $w_j$ are integration points and integration weights respectively, and $\mu_m = \sum_{j=1}^m w_j \delta_{x_j}$. The corresponding parameter-dependent empirical loss $L _m(\theta) \coloneqq \cL_m(D(\theta))$ has gradient
$$
\nabla L_m(\theta)  =  \sum_{j=1}^m w_j   \ell'(D(\theta) , x_j) \big(\psi(\theta)\big) = \int_\cX \ell'(D(\theta),x)(\psi(\theta)) \dd\mu_m(x), \quad 1\le i \le d.
$$
Typically, an importance sampling  Monte-Carlo integration consists in drawing the points $x_j$ according to a distribution $w(x)^{-1} \dd \mu(x)$, with $w^{-1} $ some probability density function, and taking $w_j = \frac{1}{m} w(x_j)$. In an active learning setting, the distribution can be adapted to the approximation task. In particular, it can be adapted to
the tangent space $\mathcal{T}_\theta$  using Christoffel sampling or some variant (see \cite{gruhlkeOptimalSamplingStochastic2024,Adcock2024Sep,Nouy2025}).
Note that once the loss $\cL$ and the model's parametrization map $D$ have been defined, the gradients can be evaluated in practice  using automatic differentiation.

\section{Gradient descent and beyond}
\label{sec:gradient-descent}

The optimization problem \eqref{min-loss} is equivalent to an optimization problem over the parameter space
$$\theta^\star \in \underset{\theta\in \Theta}{\argmin} \, L(\theta),$$
with corresponding solution $v^\star = D(\theta^\star)$ in the function space.

To define a gradient descent algorithm, we can consider a first order Taylor expansion of the loss at the current iterate  $\paramk $ and restrict the next step $\paramkp$ to belong to a certain neighborhood of $\paramk$. This can be achieved by introducing a penalized loss function
\begin{equation}
	  L^{(k)}(\theta) \coloneqq L(\paramk) + \LossGrad(\paramk)^T (\theta-\paramk) + \frac{1}{2s} \metric(\theta, \paramk),
	  \label{eq:taylor_gd}
\end{equation}
where $\metric: \mathbb{R}^d\times\mathbb{R}^d\to \R^+$  is associated with some metric on the parameter space and $s>0$. We then define $\paramkp$ as the minimizer of  $L^{(k)}(\theta)$. The first order optimality condition is
\begin{equation}
	  \GradP \metric(\paramkp, \paramk) = -2s \LossGrad(\paramk),
	  \label{eq:gdrule}
\end{equation}
where $\GradP \metric(\theta , \paramk) $ is the gradient of $\theta\mapsto \metric(\theta , \paramk)$ evaluated at $\theta$.
If $\metric(\theta, \paramk) = \|\theta -\paramk\|^2_{2}$ we have   $\GradP \metric(\theta, \paramk)=2(\theta-\paramk)$ and we recover the classical gradient descent update rule
\begin{equation}
\paramkp=\paramk -s \LossGrad(\paramk)
\label{eq:GD}
\end{equation}
with $s$ being the step size. This can be seen as the discretization of a gradient flow in parameter space
\begin{equation}
\frac{\partial \theta}{\partial t} = - \nabla L(\theta(t)).\label{eq:gradient-flow}
\end{equation}

\subsection{Preconditioned gradient descent and Newton's method}

Given some symmetric positive-definite matrix $M\in\R^{d\times d}$, choosing
\begin{equation}
	  \metric(\theta, \paramk) = \|\theta-\paramk\|^2_{M} = ( \theta-\paramk, \theta-\paramk )_M = (\theta-\paramk)^TM(\theta-\paramk),
	  \label{eq:precondgrad}
\end{equation}
it holds $\GradP \metric(\theta, \paramk)=2M(\theta-\paramk)$, and we obtain
a preconditioned update rule
\begin{equation}
	  \paramkp = \paramk-s M^{-1} \LossGrad(\paramk).
	  \label{eq:precond}
\end{equation}

Letting $H_L (\theta) : \R^d \times \R^d \to \R$ be the Hessian of $L$ evaluated at $\theta$, a Newton iteration consists in choosing for $M$ the Hessian matrix $H_L(\paramk)$ at $\paramk$, with entries
$$
M_{ij} = H_L(\paramk)(e_i, e_j),
$$
where the $e_i$ are the canonical vectors in  $\R^d$.

\subsection{From Newton's method to natural gradient descent}
\label{sec:gauss-newton}

The Hessian $H_L(\theta) $ of the loss $L$ at $\theta$ can be expressed in terms of
the Hessian $H_{\cL}(D(\theta)) : V \times V \to \R$  of the functional loss $\cL$ at $D(\theta)$, and the Hessian $H_D(\theta) : \R^d \times \R^d \to V$ of the map $D : \R^d\to V$ at $\theta$.  Indeed,
for any
$w\in \R^d$, letting $\psi(\theta) = \dDdp(\theta)$, it holds
\begin{align*}
L(\theta + w)& = \cL( D(\theta + w)) \\
&=\cL( D(\theta) + \psi(\theta)^T w + \frac{1}{2} H_D(w,w) + o(\Vert w\Vert_2^2)) \\
&= \cL( D(\theta)) + \cL'(D(\theta))(\psi(\theta)^T w) + \frac{1}{2} H_{\cL}(\psi(\theta)^T w,\psi(\theta)^T w) + \frac{1}{2} \cL'(D(\theta))(H_D(w,w)) + o(\Vert w\Vert_2^
2)
\end{align*}
so that, by identification,
$$
H_L(\theta)(w,w) =  H_{\cL}(D(\theta))(\psi(\theta)^T w,\psi(\theta)^T w) +    \cL'(D(\theta))(H_D(w,w)) .
$$
From the above development we see that the Hessian matrix $H_L(\theta)$ at $\theta$ used for a  Newton's iterate can be written
\begin{align}
H_L(\theta) = G(\theta) + K(\theta),
 \label{eq:hessian}
\end{align}
where
\begin{itemize}
     \item
$G(\theta) \in \R^{d\times d}$ is a symmetric positive semi-definite matrix with entries
$$
G(\theta)_{ij} = H_\cL(D(\theta))(\psi_i(\theta) , \psi_j(\theta))
$$
which corresponds to the functional Hessian  $H_\cL(D(\theta))$ at point $D(\theta)$ applied to the elements of the generating system of the tangent space $\cT_{\theta}$,
\item  $K(\theta) \in \R^{d\times d}$ is a symmetric matrix with entries
$$
K(\theta)_{ij} =  \cL'(D(\theta))(H_D(\theta)(e_i,e_j)).
$$
The matrix $K(\theta) $ is zero whenever the manifold $\cM$ is a linear space or has zero curvature at $\theta$, since in this case $H_D(\theta)=0$.
\end{itemize}

If the approximation manifold is nonlinear but has a uniformly bounded and small curvature, one could disregard the second term $K$ in \eqref{eq:hessian}, and consider the iteration \eqref{eq:precond} with the preconditioner $M = G^{(k)} \coloneqq G(\paramk).$
In the case where $\cL(v) = \frac{1}{2}\Vert u - v \Vert_V^2$ and $V$ is a Hilbert space, $H_\cL$ coincides with the inner product in $V$ and $G^{(k)}$ is the Gram matrix of the generating system of $\cT_{\paramk}$ with respect to the inner product in $V$. This corresponds to a Gauss-Newton  iteration in function space \cite{martensNewInsightsPerspectives2020}, which consists in locally linearizing the manifold.

This also corresponds to a natural gradient descent (NGD) where the preconditioner $M$ is the  Gram matrix of the generating system of $\cT_{\paramk}$ for the inner product in $V$.
For a  general loss functional, the approximate Newton step also
corresponds to a natural gradient descent  where the preconditioner
$ M =  G^{(k)}$ is the Gram matrix
of the generating system of $\cT_{\paramk}$ with respect to the (semi-)inner product induced by
the functional Hessian $H_\cL$ at $D(\paramk)$.
The different variants of NGD found in the literature come from different choices of the inner product on $\cT_{\paramk}$, which may not necessarily correspond to the one induced by $H_\cL$ or the inner product of $V$ when $V$ is a Hilbert space.

\begin{remark}
Although in some cases an NGD update might coincide with a second order method like the Newton method it is important to note that for computing $G(\theta)$ one only needs first order information whereas for a second order method, second order derivative $H_D(\theta)$ of the parameterization map are needed, thus adding an extra computational complexity.
\end{remark}

\subsection{Natural gradient}
\label{sec:nat-grad}

The Euclidean metric in the parameter space induces a metric in the tangent space $\cT_k := \cT_{\paramk}$ which may be not suitable for the optimization task in the function space. A natural gradient descent  consists in introducing on the tangent space $\cT_{k}$ a more natural metric from the functional perspective. Given an inner product $(\cdot, \cdot)_{W}$ on $\cT_{k}$, with associated norm $\Vert \cdot \Vert_{W}$, we define
$$
\rho(\theta , \paramk) =  \Vert \psikT (\theta - \paramk) \Vert_{W}^2 = (\theta - \paramk)^T \Gk (\theta - \paramk),
$$
with $\psik\coloneqq \psi(\paramk)$ and
\begin{equation}
\Gk \coloneqq (\psik ,\psikT)_W
\label{eq:gram}
\end{equation}
the Gram matrix of the generating system of $\cT_{k}$ with respect to the inner product $(\cdot,\cdot)_W$ with entries $G^{(k)}_{ij} = (\psiki ,\psikj )_W$.
The matrix $G^{(k)}$ may not be invertible. This occurs when the elements of the generating system $\psik$ are linearly dependent ($\dim(\cT_k) < d$), in particular for over-parametrized manifolds. If $G^{(k)}$ is not invertible, the convex functional \eqref{eq:taylor_gd} admits infinitely many minimizers. A gradient step can be defined by selecting the minimizer with minimal euclidean norm, this corresponds to a gradient step
\begin{equation}
\paramkp = \paramk - s \Ginvk \nabla L(\paramk), \label{eq:ngd-update}
\end{equation}
with $\Ginvk$ being the Moore-Penrose pseudo-inverse of $G^{(k)}$.
Given a spectral decomposition $\Gk = U \Lambda U^T = \sum_{1\le i \le d} \lambda_i u_iu_i^T$, with $U$ an orthogonal matrix and $\Lambda$ a diagonal matrix, the Moore-Penrose pseudo-inverse is given by
\begin{equation}
\Ginvk =  \sum_{1\le i \le d, \lambda_i \neq 0} \lambda_i^{-1} u_i u_i^T =: U_+\Lambda_+^{-1}U_+^T.
\label{eq:penrose}
\end{equation}
Defining the matrix
$\Qk \coloneqq \Lambda_+^{-\frac 1 2}U_+^T$
we have that $\Ginvk=\Qk^T\Qk$ and  $\phik\coloneqq \Qk \psik  \in W^{\dim(\cT_k)}$
forms a $W$-orthonormal basis of $\cT_k$.
Depending on the choice of norm $\Vert \cdot \Vert_W$ on  the tangent space $\cT_k$, one will find different expressions for the matrix $G(\theta)$.

\vspace{10pt}
\begin{example}

We consider the same cases as in  Example \ref{ex:examples_loss}.
\begin{enumerate}[(i)]
\item Given a point-wise loss $\ell(v , x) = \tilde \ell(v(x) , x)$, and  assuming $\tilde \ell( \cdot , x)$ is two times differentiable, we can equip the tangent space $\cT_{\theta}$ with the metric induced by the Hessian $H_\cL$ at $v = D(\theta)$, and
$$G^{(k)}_{ij}=(\psiki, \psikj)_W = \int_{\cX} \psiki(x) \tilde \ell''(v(x) , x) \psikj(x) \dd\mu(x). $$
\item Assuming $\cL(v) = \frac{1}{2} \Vert u-v \Vert_V^2$ with  $\Vert v \Vert_V^2 = \int_{\cX} \Vert L_x v \Vert_2^2 \dd\mu(x)$,  a natural choice is $W= V, $ and
$$
G^{(k)}_{ij} = \int_\cX (L_x \psiki)^T L_x \psikj \, \dd\mu(x).
$$
\item For least-squares density estimation in $V = L^2_\nu$, a natural choice is $W=V$.
\item For density estimation with cross-entropy, we can equip the tangent space $\cT_{\theta}$ with the metric induced by the Hessian $H_\cL$ at $v = D(\theta)$, that yields
$$
G^{(k)}_{ij} = \int_\cX \psiki(x) \psikj(x) \frac{u(x)}{v(x)^2} \dd\mu(x)
$$
where $u$ is the target density with respect to the measure $\nu.$
\item For the solution of the linear elliptic PDE $- \nabla \cdot (a(x) \nabla u) = f(x)$ in $V = H^1_0(\cX)$, we can choose $W=V$, the space equipped with the operator induced norm $\Vert v \Vert_V^2 = \int_{\cX} a(x) \Vert \nabla v \Vert_2^2 \dd\mu(x)$. The matrix
$$
G^{(k)}_{ij} = \int_\cX a(x) \nabla \psiki  \cdot \nabla \psikj \dd\mu(x)
$$
corresponds to the standard ``stiffness'' matrix and $\psikT\Ginvk\glossk$ is the Galerkin projection of $\nabla \cL(D(\paramk)) = D(\paramk) - u$ onto $\cT_{\paramk}$.
\end{enumerate}
\end{example}

The actual computation of the NGD update given by \eqref{eq:ngd-update} may require some form of regularization when the matrix $G^{(k)}$ has a bad effective condition number (ratio of maximal eigenvalue and minimal nonzero eigenvalue). Typical choices are:
\begin{itemize}
    \item \textit{Spectral cutoff}: we replace $[G^{(k)}]^\dagger$ with $[G^{(k)}]^\dagger_\varepsilon = \sum_{1\le i \le d, \lambda_i > \varepsilon} \lambda_i^{-1} u_i u_i^T$.
    \item \textit{Spectral shift} or \textit{Tikhonov regularization}: we consider a metric $\rho(\theta , \paramk) = (\theta-\paramk)^T (G^{(k)} + \varepsilon I)(\theta - \paramk)$, with $\epsilon>0$, which yields   an update rule
    $$
    \paramkp = \paramk - s (G^{(k)} + \epsilon I)^{-1} \nabla L(\paramk).
    $$
    This tends to a natural gradient step when $\epsilon \to 0$, while it tends to a classical gradient step when $\epsilon \to \infty$ and $s = s(\epsilon) \sim  \epsilon.$
    \item \textit{Spectral flooring}: a combined approach where $[G^{(k)}]^\dagger$ is replaced by $[G^{(k)}]^\dagger_\varepsilon + \Gamma_{\varepsilon^{-1}}$ with
    $$\Gamma_{\varepsilon^{-1}}\coloneqq \sum_{1\le i \le d, \lambda_i \leq \varepsilon} \varepsilon^{-1} u_i u_i^T.$$
    This method splits the parameter correction in two orthogonal parts, one that will follow the natural gradient direction projected onto $\mathrm{span}\{ u_i:1\le i \le d, \lambda_i > \varepsilon\}$ and another following the projection of the classical gradient onto the space $\mathrm{span}\{u_i:1\le i \le d, \lambda_i \leq \varepsilon\}$. This method allows exploration of a space that spectral cutoff can not  explore  while simultaneously avoiding the spectral bias created by the spectral shift method.
\end{itemize}

\subsection{Natural gradient in function space}
\label{sec:flow}

In order to interpret the NGD in function space, we first note that
$$
g^{(k)} := \psikT \Ginvk \nabla L(\theta^{(k)}) \in \cT_k
$$
is the solution of
$
(g^{(k)} , \psik)_W = \nabla L(\paramk) = \cL'(\vk)(\psik)
,$
with $\vk = D(\paramk),$ or equivalently
$$
(g^{(k)} , w)_W = \cL'(\vk)(w) \quad \forall w \in \cT_k.
$$
Thus $$g^{(k)} = \gradM \cL(\vk) \in \cT_k$$ is the Riemannian gradient of $\cL$ at $\vk$, which is the Riesz representer of $\cL'(\vk)$ in the tangent space  $\cT_{\vk} = \cT_k  $ equipped with the inner product $(\cdot,\cdot)_W$. The natural gradient therefore operates in the parameter space a correction in the direction of the coordinates of the functional gradient in the generating system $\psik$ of $\cT_{\vk}$. More precisely, the NGD iteration \eqref{eq:ngd-update} corresponds to the following iteration in function space
$$
v^{(k+1)} = R(\vk - s \gradM \cL(\vk))
$$
with $R$ being a natural retraction defined by
\begin{equation}
R(v + \Delta) = D(\theta + \delta), \quad \text{for} \quad  v = D(\theta)\in \cM \quad \text{and} \quad  \Delta  = \psi(\theta)^T \delta \in \cT_v . \label{eq:retraction}
\end{equation}
With this definition, we have
\begin{equation*}
v^{(k+1)} = D(\theta^{(k+1)}), \quad \theta^{(k+1)} = \theta^{(k)} + s p^{(k)}
\end{equation*}
where $p^{(k)}$ is such that
$$
\psikT p^{(k)} = - \gradM \cL(\vk).
$$
One iteration of the algorithm in function space is depicted on \Cref{fig:nat-grad-0}.

\begin{figure}[h]
 \centering
\begin{tikzpicture}[>=Stealth, line width=0.8pt,
    declare function={
        manifold(\x) = -0.5*(\x-2)*(\x+2);
    }
    ]
    \pgfmathsetmacro{\px}{1.15};
    \pgfmathsetmacro{\py}{manifold(\px)};
    \pgfmathsetmacro{\slope}{-\px };

    \draw[red] plot[domain=-1.5:2, samples=50] (\x, {manifold(\x)}) node[left] {$\color{red} \cM$};

    \pgfmathsetmacro{\w}{1.4}
    \pgfmathsetmacro{\pr}{1.0}
    \draw[ForestGreen, thick] (\px-\w,\py-\slope*\w) -- ({\px+\w}, {\py+\slope*\w}) node[right] {$\cT_{v^{(k)}}$};
    \draw[->, black, dashed, thick] (\px,\py) -- ({\px-2*\pr}, {\py-\slope*2*\pr}) node(projection) [label=above left:{$-\gradM   \mathcal{L}(v^{(k)})$}] {};

    \draw[->, ForestGreen, thick] (\px,\py) -- ({\px-1.5*\pr}, {\py-1.5*\slope*\pr}) node(update) [label=left:{${\color{ForestGreen}-s\gradM   \mathcal{L}(v^{(k)})}$}] {};

    \fill[red] (\px,{manifold(\px)}) circle (2pt) node(v) [label=below left:{${\color{red} v^{(k)}}$}] {};
    \fill[red] (\px-2,{manifold(\px-2)}) circle (2pt) node(vp) [label= left:{${\color{red} v^{(k+1)}}$}] {};


    \draw[->, thick, dotted, orange] (update.center) to[bend right=0] node[midway, orange, left] {$R$} (vp.center);

\end{tikzpicture}

    \caption{Diagram of a natural gradient step.
    The retraction $R$ takes the proposed update ${\colorT -s \gradM \cL(\vk)} = s\psikT \pk$ in the tangent space $\cT_{\vk}$ and yields the next iterate ${\colorM \vkp}=D(\paramkp)=D(\paramk + s\pk)$.}
    \label{fig:nat-grad-0}
\end{figure}

\begin{remark}
For differentiable manifolds with known
exponential and logarithmic maps (e.g., low-rank tensor manifolds), the retraction defined by them might be a preferable choice, except if they are   too expensive to compute.
\end{remark}

If the manifold is embedded in the  Hilbert space $V$ and we choose $W=V$, or if $W \subset V$, we have the following interpretation of the gradient as a projection of an element in $W$. Figure \ref{fig:nat-grad} illustrates one step of natural gradient in this setting.
\begin{lemma}
\label{lem:projection}
Assume $W\subset V$. Then for all $v\in \cM$, $\cL'(v)$ is  a linear form on $W$ and the restriction $\cL'(v)_{\vert W}$ can be identified by Riesz representation with $\nabla _W \cL(v) \in W$. Thus
$$
\gradM \cL(v) = P^W_{\cT_{v}} \nabla_W \cL(\v),
$$
with $ P^W_{\cT_{v}} $ the $W$-orthogonal projection onto $\cT_v.$ 
In particular, for $V$ a Hilbert space and $W=V$, $$\gradM \cL(v) = P^V_{\cT_{v}} \nabla \cL(\v).$$
\end{lemma}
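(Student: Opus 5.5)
The plan has three steps: show that $\cL'(v)$ restricts to a bounded functional on $W$, represent it by Riesz in $W$, and then use the variational characterization of $\gradM \cL(v)$ from \Cref{sec:flow}.

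\textbf{Step 1 (restriction is bounded).} I interpret $W\subset V$ as a continuous embedding. In other words, $W$ is a Hilbert space with inner product $(\cdot,\cdot)_W$ containing every tangent space $\cT_v$, and $\Vert w\Vert_V \le C \Vert w\Vert_W$ for all $w\in W$. Since $\cL$ is Fréchet differentiable on $V$, $\cL'(v)$ is a bounded linear form on $V$. Hence $|\cL'(v)(w)| \le \Vert \cL'(v)\Vert_{V'} C \Vert w\Vert_W$, so the restriction $\cL'(v)_{\vert W}$ lies in $W'$.

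\textbf{Step 2 (Riesz representer).} By the Riesz representation theorem in $W$, there is a unique $\nabla_W \cL(v)\in W$ with
\[
(\nabla_W \cL(v), w)_W = \cL'(v)(w) \quad \text{for all } w\in W.
\]
This proves the first claim of the lemma.

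\textbf{Step 3 (projection identity).} From \Cref{sec:flow}, $g=\gradM \cL(v)$ is characterized as the unique element of $\cT_v$ such that $(g,w)_W = \cL'(v)(w)$ for all $w\in\cT_v$. Uniqueness holds because the Gram system restricted to $\cT_v$ defines an inner product there, so the Riesz representer in the finite-dimensional space $\cT_v$ is unique.

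Let $P = P^W_{\cT_v}$, which is well defined because $\cT_v$ is finite dimensional and therefore closed in $W$. For $w\in\cT_v$, the defining property of the orthogonal projection gives
\[
(P\nabla_W\cL(v), w)_W = (\nabla_W\cL(v), w)_W = \cL'(v)(w).
\]
So $P\nabla_W\cL(v)\in\cT_v$ satisfies the same characterization as $\gradM \cL(v)$, and uniqueness gives equality.

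\textbf{Special case.} For $V$ Hilbert and $W=V$, the representer $\nabla_W\cL(v)$ is exactly the Riesz gradient $\nabla\cL(v)=\mathcal{R}_V\cL'(v)$, which gives the second formula.

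\textbf{Main obstacle.} The argument itself is elementary. The only delicate point is Step 1: the hypothesis $W\subset V$ must be read as a continuous embedding of Hilbert spaces with $\cT_v\subset W$, since otherwise the restriction need not be bounded on $W$. I would state this reading explicitly at the start of the proof.
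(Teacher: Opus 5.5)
Your proposal is correct and follows the same argument as the paper: both note that $\gradM \cL(v)$ and $P^W_{\cT_v}\nabla_W\cL(v)$ lie in $\cT_v$ and represent $\cL'(v)$ on $\cT_v$, so they must coincide. Your Steps 1--2 and the uniqueness remark spell out what the paper's one-line proof leaves implicit, namely that $W$ is a Hilbert space containing $\cT_v$ and continuously embedded in $V$.
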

\begin{proof}
By definition of  $\gradM \cL(v)$ and  $\nabla_W \cL(v)$, it holds
$(\gradM \cL(v) , w)_W = \cL'(v)(w) = (\nabla_W \cL(v) , w)_W$ for all $w \in \cT_v$, which  yields $\gradM \cL(v) = P^W_{\cT_v} \nabla_W \cL(\v)$.
\end{proof}

\begin{remark}
When the space $W$ at $v$ is equipped with the inner product induced by the Hessian  $H_\cL(v)$, NGD will be referred to as a Gauss-Newton NGD.
\end{remark}

\begin{remark}
\label{rmk:one-step-convergence}
    If the functional loss $\cL$ is defined by the quadratic elliptic functional $\cL(v) = \frac 1 2 H_\cL(v, v) - \ell(v)$, with $\ell$ a linear form, then $\cL'(v)(w)=H_\cL(v, w) - \ell(w) = H_\cL(v-u, w)$, with $u$ the minimizer of $\cL$. With the tangent space equipped with the inner product induced by $H_\cL(\vk)$, the direction $p^{(k)} \in \cT_\vk$ is defined by $H_\cL(\psikT p^{(k)} , w) =  - \cL'(\vk)(w)  = H_\cL(u-\vk, w)  $ for all $w\in \cT_k$. This means that $\psikT p^{(k)} = P^W_{\cT_\vk} (u-\vk)$ is the orthogonal projection onto $\cT_k$ of the error $u-\vk$ with respect to the inner product induced by $H_\cL$.
    If $\cM$ is a linear space,  the convergence towards the optimizer $P^W_{\cM} u$ is  realized in one step with a step size $s=1$.
\end{remark}

\begin{remark}
Interestingly, in the case of a Hilbert space $V$, when the tangent space metric $W$ is given by the Hessian of the loss $H_\cL(v)$ at $v$, identified with an operator $\mathcal{H}_\cL(v) : V \to V$, we have
\begin{align*}
&({\gradM \cL(v),w})_{W}  = H_\cL(v)({\gradM \cL(v), w})  = \cL'(\v)(w) \quad \forall w\in \cT_v \\
\iff& ({\mathcal{H}_\cL(v)) \gradM \cL(v) , w}_V = ({\nabla \cL(\v), w})_V \quad \forall w\in \cT_v.
\end{align*}
The direction $p^{(k)} = - \gradM \cL(\vk)$ thus obtained corresponds to the orthogonal projection of the Newton correction $\mathcal{H}_\cL(\vk)^{-1} \nabla  \cL(\vk) $ with respect to the $W$-inner product induced by $\mathcal{H}_\cL(\vk)$, that is  $p^{(k)}=- P_{\cT_k}^W \mathcal{H}_\cL(\vk)^{-1} \nabla  \cL(\vk) $.
In the context of PDEs this corresponds to the classical variational setting with $P^W_{\cT_k}$ being a Galerkin projection onto the approximation space $\cT_k$.
\end{remark}

\begin{figure}[h]
 \centering
\begin{tikzpicture}[>=Stealth, line width=0.8pt,
    declare function={
        manifold(\x) = -0.5*(\x-2)*(\x+2);
    }
    ]
    \pgfmathsetmacro{\px}{1.15};
    \pgfmathsetmacro{\py}{manifold(\px)};
    \pgfmathsetmacro{\slope}{-\px };

    \draw[red] plot[domain=-1.5:2, samples=50] (\x, {manifold(\x)}) node[left] {$\color{red} \cM$};

    \pgfmathsetmacro{\w}{1.4}
    \pgfmathsetmacro{\pr}{1.0}
    \draw[ForestGreen, thick] (\px-\w,\py-\slope*\w) -- ({\px+\w}, {\py+\slope*\w}) node[right] {$\cT_{v^{(k)}}$};
    \draw[->, black, dashed, thick] (\px,\py) -- ({\px-2*\pr}, {\py-\slope*2*\pr}) node(projection) [label=above left:{$-P^W_{\mathcal{T}_{v^{(k)}}}\nabla_W  \mathcal{L}(v^{(k)})$}] {};

    \draw[->, ForestGreen, thick] (\px,\py) -- ({\px-1.5*\pr}, {\py-1.5*\slope*\pr}) node(update) [label=left:{${\color{ForestGreen}-s P^W_{\cT_{v^{(k)}}} \nabla_W \cL(v^{(k+1)})}$}] {};

    \fill[blue] (\px,{manifold(\px)+4}) circle (2pt) node(u) [label=above right:{}] {};
    \node[black] at (\px+1,{manifold(\px)+3}) {$V$};
    \fill[red] (\px,{manifold(\px)}) circle (2pt) node(v) [label=below left:{${\color{red} v^{(k)}}$}] {};
    \fill[red] (\px-2,{manifold(\px-2)}) circle (2pt) node(vp) [label=below left:{${\color{red} v^{(k+1)}}$}] {};

    \draw[->, thick, blue] (v.center) -- (u.center) node[midway, right, black] {${\color{blue} -\nabla_W \cL(v^{(k)})}$};
    \draw[thick, gray, dashed] (u.center) -- (projection.center);
    \draw[->, thick, dotted, orange] (update.center) to[bend right=0] node[midway, orange, left] {$R$} (vp.center);

\end{tikzpicture}

    \caption{Diagram of a natural gradient step for $W\subset V$.
    The retraction $R$ takes the proposed update ${\colorT -sP^W_{\cT_k} \nabla_W \cL(\vk)}=s\psikT \pk$ in the tangent space $\cT_{\vk}$ and yields the next iterate ${\colorM \vkp}=D(\paramkp)=D(\paramk + s\pk)$.}
    \label{fig:nat-grad}
\end{figure}

\paragraph{Functional gradient flow.}
The natural gradient descent appears as a time discretization of a gradient flow in function space
\begin{equation}
\frac{\partial v}{\partial t} = - \gradM \cL(v(t)) , \label{eq:natural-gradient-flow}
\end{equation}
where $\gradM \cL(v(t))$ is the Riesz representer of $\cL'(v(t))$ in the tangent space $\cT_{v(t)} $ to $\cM$ at $v(t).$  This defines a dynamical system in the manifold.
Given the parametrization $v(t) = D(\theta(t))$, the functional moment $\frac{\partial v}{\partial t}$ is related to the parameter moment $\frac{\partial \theta}{\partial t}$ through the relation
$$
\frac{\partial v}{\partial t} = \frac{\partial D}{\partial \theta}(\theta(t)) \frac{\partial \theta}{\partial t} = \psi(\theta(t))^T p(t) \in \cT_{v(t)}.$$
If $\dim \cT_{v(t)} = d$, i.e.  $\psi(\theta(t))$ forms a basis of $\cT_{v(t)}$, there is a one-to-one relation between  $p(t)$ and  $\frac{\partial v}{\partial t}$, and the functional gradient flow defines a dynamical system in parameter space.
Conversely, if $\dim \cT_{v(t)} < d$, i.e. $\psi(\theta(t))$ is a redundant generating system of $\cT_{v(t)}$, the functional gradient flow remains (locally) well defined, even if there   may not exist a well defined dynamical system in the parameter space.

Assuming that the manifold is embedded in the  Hilbert space $V$ and we choose $W=V$, or in the case where $W \subset V$, we deduce from Lemma \ref{lem:projection}  the
following interpretation of the natural gradient flow as a dynamical system induced by a Dirac-Frenkel principle.
\begin{proposition}
\label{prop:projection}
Assuming $W\subset V$,  the natural gradient dynamics is
$$\dvdt = -P^W_{\cT_{v(t)}} \nabla_W \cL(\v(t))$$
where $P^W_{\cT_{v(t)}}$ is the $W$-orthogonal projection onto $\cT_{v(t)}$. In particular, for $V$ a Hilbert space and $W=V$,
$$\dvdt = -P^V_{\cT_{v(t)}} \nabla \cL(\v(t)).$$
\end{proposition}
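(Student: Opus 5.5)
The proposition is Lemma~\ref{lem:projection} applied pointwise in time to the trajectory of the flow \eqref{eq:natural-gradient-flow}.

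First I fix $t$ and set $v = v(t) \in \cM$. By definition, the natural gradient flow reads $\dvdt = -\gradM \cL(v(t))$. Here $\gradM \cL(v(t))$ is the Riesz representer in $(\cT_{v(t)}, (\cdot,\cdot)_W)$ of the restriction of $\cL'(v(t))$ to $\cT_{v(t)}$. This rests on two facts.

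\begin{itemize}
\item $\cT_{v(t)}$ is finite dimensional, since it is spanned by $\psi_1(\theta(t)),\dots,\psi_d(\theta(t))$. It is therefore a closed subspace of $W$, and the $W$-orthogonal projection $P^W_{\cT_{v(t)}}$ is well defined.
\item The Riesz representer exists and is unique. When the generating system is redundant, the representer is still unique as an element of $\cT_{v(t)}$; only its coordinates are not unique. This is consistent with the earlier formula $\gk = \psikT\Ginvk\nabla L(\paramk)$.
\end{itemize}

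Second, I invoke Lemma~\ref{lem:projection} at the point $v = v(t)$. Since $W\subset V$, the form $\cL'(v(t))$ restricts to a linear form on $W$, represented by $\nabla_W\cL(v(t))\in W$. The lemma then gives $\gradM \cL(v(t)) = P^W_{\cT_{v(t)}}\nabla_W \cL(v(t))$. The reason is that both sides of this identity lie in $\cT_{v(t)}$ and have the same $W$-inner products with every $w\in\cT_{v(t)}$. Substituting into the flow yields the first claimed identity for every $t$.

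For the special case where $V$ is a Hilbert space and $W=V$, the $W$-gradient is the $V$-gradient, $\nabla_W \cL = \mathcal{R}_V\cL' = \nabla\cL$. The second claimed identity follows directly.

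The only delicate point is a functional-analytic caveat: for $\nabla_W\cL(v)$ to exist, $\cL'(v)_{\vert W}$ must be continuous with respect to $\Vert\cdot\Vert_W$. This holds when $W$ is continuously embedded in $V$, which I would take as the meaning of ``$W\subset V$''. That continuity is already used in Lemma~\ref{lem:projection}, so no extra argument is needed beyond noting that the projection identity holds at each time along the trajectory.
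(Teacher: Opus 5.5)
Your proposal is correct and takes the same approach as the paper, which gives no separate proof and simply deduces the proposition from Lemma~\ref{lem:projection}, applied at each time $t$ with $v = v(t)$ and substituted into the flow \eqref{eq:natural-gradient-flow}. Your continuity caveat is a fair sharpening; strictly, Riesz representation on $W$ also requires $W$ to be complete. The paper leaves these hypotheses implicit in the lemma as well.
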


We will now define new algorithms in function space and identify corresponding algorithms in parameter space.

\section{Natural momentum algorithms}
\label{sec:natural-momentum}

What can be gained by adding momentum to NGD? A first observation is that the dynamics of a gradient flow even in function space would get stuck in the first \textbf{local minima} it encounters due to the non linear nature of the approximation manifold $\cM$ or the loss function $\cL$ thus adding inertia into the dynamics could allow further landscape exploration.

Another argument stems from the error committed by discretizing the natural gradient flow dynamics. Indeed, the direction given by the natural gradient is the best from a functional perspective but this is only valid in a small vicinity of the current iterate. Taking a discrete step by updating the parameters through \eqref{eq:ngd-update} follows a straight line in parameter space which deviates from the continuous functional dynamics. This occurs because it does not consider the following facts.
\begin{itemize}
    \item The \textbf{manifold $\cM$ is non linear} (e.g. a neural network) meaning that $G(\theta)$ will vary with $\theta$ thus the straight line in parameter space, although being a good direction locally, will deviate from the optimal path. This deviation occurs because the model hessian $H_D$ is only zero for linear manifolds.
    \item  The \textbf{loss function $\cL$ is not a quadratic form} (e.g., cross entropy) thus even if we precondition  by the hessian $H_\cL$  of the functional loss, the functional update direction does not point directly to the minimizer contrary to \cref{rmk:one-step-convergence}.
\end{itemize}

Furthermore, the direction $p^{(k)}=-\Ginvk\nabla L(\paramk)$ is typically never obtained exactly as it involves an estimation of $G^{(k)}$ and $\LossGrad(\paramk)$, typically through Monte Carlo estimation. In addition,  the pseudo-inverse of $G^{(k)}$ is usually replaced  by  a \textbf{regularized version}, see \Cref{sec:nat-grad}.

We argue that adding information from previous iterations can help to correct these deviations resulting in an acceleration of the convergence of NGD.

\subsection{Classical momentum strategies}
\label{sec:classical-momentum}

\subsubsection{Heavy-Ball}
The \textit{heavy ball method} in parameter space was proposed originally in \cite{polyakMethodsSpeedingConvergence1964} as the discretization of the second order ordinary differential equation
\begin{equation}
\frac{\partial^2 \theta}{\partial t^2} + b(t) \frac{\partial \theta}{\partial t} = -\LossGrad(\theta)
\label{eq:hb-sos}
\end{equation}
or its equivalent first order coupled system
\begin{subequations}
    \begin{align}
\frac{\partial \theta}{\partial t} &= p(t) \label{eq:hb-vupdate}\\
\frac{\partial p}{\partial t} &= - b(t) p(t) - \LossGrad(\theta(t)), \label{eq:hb-fos}
\end{align}\label{eq:hb-param}
\end{subequations}
where we have introduced the momentum (or velocity) variable $p$.

The insight behind is that $\LossGrad$ no more drives the velocity but the acceleration, so that flat regions of the loss landscape can be quickly escaped from  or local minima avoided as the inertia can pull the optimization out of it. The addition of the damping term with coefficient $b(t)$ helps to avoid the oscillations of the otherwise conservative dynamics.

Discretizing \eqref{eq:hb-param}  with a step size $h_k$ (following \cite{attouchRavineMethodNesterov2022,attouchFirstorderOptimizationAlgorithms2020}) yields
$$
\frac{\pk-\pkm}{h_k} = - b_k \pkm -\LossGrad(\paramk)
$$
and
$$
\frac{\paramkp-\paramk}{h_k} = \pk = (1- b_kh_k) \pkm -h_k\LossGrad(\paramk)
$$
which finally gives, by rearranging terms,
\begin{subequations}
\begin{align}
\paramkp &= \paramk  + h_k\beta_k \pkm -\alpha_k \LossGrad(\paramk) \label{eq:hb-momentum-version}\\
 &= \paramk  + \beta_k \frac{h_k}{h_{k-1}}(\paramk - \paramkm) -\alpha_k\LossGrad(\paramk) \label{eq:hb-param-version}
\end{align}\label{eq:hb}
\end{subequations}
with $\alpha_k=h_k^2=s_k$ and $ \beta_k=(1-h_k b_k)$ and we have that previous gradients information  or previous parameter values appear in the update formulation through $\paramkm$  and $\pkm$.

\subsubsection{Nesterov}
Instead, the \textit{Nesterov} acceleration, originally proposed in \cite{nesterovMethodSolvingConvex1983}, required more than thirty years to be linked with a  dynamical system as \eqref{eq:hb-sos} using a  damping parameter depending on time as $b(t)=\frac 3 t$ \cite{suDifferentialEquationModeling2015}. The resulting discretization can be written in the following three steps iteration
\begin{subequations}
\begin{align}
    \yk &= \paramk + \beta_k h_{k-1} \pkm
    \label{eq:Nesterov-y} \\
    \paramkp &= \yk-\alpha_k\LossGrad(\yk) \label{eq:Nesterov-theta}\\
    \pk &= (\paramk - \paramkm)/h_k,
\end{align}
\end{subequations}
where $\beta_k=\frac{k-1}{k+1}\approx 1-\frac 3 k$ is related to the choice of $b(t)=\frac 3 t$ from \cite{suDifferentialEquationModeling2015}. We draw particular attention to the fact that the evaluation of the gradient is not performed at $\paramk$ as in \eqref{eq:hb-param-version} but instead at the intermediate position $\yk=\paramk+\beta_k(\paramk - \paramkm)$. The equation \eqref{eq:Nesterov-y} provides  the drift caused by the the momentum while the equation \eqref{eq:Nesterov-theta} is a typical gradient update but taken at the shifted position.

To design computationally efficient functional versions of Nesterov method it will be useful to work with the combined equation
\begin{equation}
    \paramkp = \paramk + \beta_k(\paramk - \paramkm)-\alpha_k\LossGrad(\paramk + \beta_k(\paramk - \paramkm)) \label{eq:Nesterov-combined}.
\end{equation}

\subsection{Natural Heavy-Ball algorithms}
\label{sec:heavy-ball}

\subsubsection{A natural version of Heavy-Ball}
\label{sec:nhb}

One approach to derive functional versions of  accelerated algorithms is to write
the dynamical system \eqref{eq:hb} in function space. A functional version of  \eqref{eq:hb} is
\begin{subequations}
\begin{align}
    \dvdt &= \cP(t) \\
    \dPdt &= -b(t)\cP(t) - \gradM \cL(v(t)) + \cN(t)  \label{eq:momentum-functional-full}
\end{align}\label{eq:functional-V}
\end{subequations}
with $\cP(t) \in V$ the functional momentum, $\gradM \cL(v(t))$ the manifold gradient such that $(\gradM\cL(v), w)_W = \cL'(v(t))(w)$ for all $w \in \cT_{v(t)}$,  and $\cN(t)$ in a complement of $\cT_{v(t)}$ such that the trajectory $v(t)$  remains in $\cM$.
The trajectory $v(t)$ remaining in $\cM$, the momentum $\cP(t) \in \cT_{v(t)}$ can be written $\cP(t)=\psi(t)^T p(t)$.
To define the evolution of $(v(t), \cP(t))$, it is sufficient to only consider a projection of the equation
\eqref{eq:momentum-functional-full} onto  $\cT_{v(t)}$ with respect to an inner product $(\cdot, \cdot)_X$, possibly different from the inner product $(\cdot, \cdot)_W$ defining the  gradient of $\cL$, and assume that $\cN(t)$ is $X$-orthogonal to $\cT_{v(t)}.$ This finally yields the dynamical system
\begin{subequations}
\begin{align}
    \dvdt &= \cP(t) \label{eq:functional-cP}\\
    P^X_{\cT_{v(t)}}\dPdt &= -b(t)\cP(t) - \gradM \cL(v(t)) \label{eq:momentum-continuus-equation}
\end{align}\label{eq:hb-continuus-equation}
\end{subequations}
where the second equation is equivalent to
$$
(\psi(t), \dPdt)_X = - (\psi(t) , b(t)\cP(t) - \gradM \cL(v(t)))_X.
$$
The choice of different spaces $W$ and $X$ may be relevant for  problems where the momentum and its time derivative do not belong to the same function spaces.
This becomes particularly relevant when $W$ stems from the Hessian of the loss $H_\cL$
whose role is a preconditioning of $\cL'$.

Similarly to what is done for classical momentum methods, we proceed to discretize the continuous equation \eqref{eq:momentum-continuus-equation} as
$$
(\psik, \frac{\Pk-\Pkm}{h_k})_X = -(\psik, b_k \Pkm + \gk)_X,
$$
with $\gk=\gradM\cL(\vk)$.
After rearranging the terms we obtain the following equation for $\Pk$
\begin{equation}
(\psik, \Pk)_X = \beta_k(\psik, \Pkm)_X - h_k (\psik, \gk)_X, \label{eq:momentum-tangent-space}
\end{equation}
with $\beta_k = 1-h_k b_k.$
Recalling that $\Pk=\psikT\pk$, $\Pkm=\psikmT\pkm$ and $\gk=\psikT\Ginvk\LossGrad(\paramk)$ we get
$$
(\psik, \psikT)_X\pk = \beta_k (\psik, \psikmT)_X\pkm - h_k (\psik, \psikT)_X \Ginvk \LossGrad(\paramk),
$$
or more concisely
\begin{equation}
\GXk\pk = \beta_k\GXkkm\pkm - h_k\GXk \Ginvk\LossGrad(\paramk), \label{eq:nhb-momentum-update}
\end{equation}
with $G_X^{(k)} \coloneqq ({\psik, \psikT})_X$ the Gram matrix of the generating system $\psi_k$ of $\cTk$ under the metric $X$ and $\GXkkm\coloneqq ({\psik, \psikmT})_X$ the cross Gram matrix between the generating systems of tangent spaces $\cTkm$ and  $\cTk$. We obtain  the functional momentum direction at iteration $k$
\begin{equation}
\Pk=\psikT\pk = \beta_k\psikT\GXinvk\GXkkm\pkm - h_k\psikT\Ginvk\LossGrad(\paramk).
\end{equation}

Finally, going back to \eqref{eq:functional-cP} we define
$$
v^{(k+1)} = R(\vk + h_k \cP^{(k)})
$$
that is
$$
v^{(k+1)} = D(\theta^{(k+1)}), \quad \theta^{(k+1)} = \theta^{(k)} + h_k \pk.
$$
This results in the formulation of a natural heavy ball (NHB) update rule
\begin{equation}
\paramkp
= \paramk + \frac{h_k}{h_{k-1}}\beta_k\GXinvk\GXkkm(\paramk - \paramkm) - \alpha_k\Ginvk\LossGrad(\paramk) \label{eq:nhb-update}
\end{equation}
where we used the identity $\pkm=\frac{\paramk - \paramkm}{h_{k-1}}$. Here $\beta_k=(1-h_k b_k)$ and $\alpha_k=h_k^2$ are the same as in the classical algorithm.

Note that the term
$$\psikT\GXinvk\GXkkm\pkm = \psikT\GXinvk ({\psik, \psikmT})_X \pkm = P_{\cT_{\paramk}}^X \Pkm$$
corresponds to the $X$-orthogonal projection of the previous momentum $\Pkm=\psikmT\pkm\in \cT_{\paramkm}$ into $\cT_{\paramk}$.

In summary, we have the following definition.
\begin{definition}[Natural Heavy-Ball (NHB)]
The natural Heavy-Ball method in function space can be written as
\begin{align*}
    \vkp = R[\vk + \frac{h_k }{h_{k-1}}\beta_k P^X_{\cT_{\vk}} \psikmT (\paramk-\paramkm)  - \alpha_k \gradM \cL(\vk)].
\end{align*}
When we choose the retraction given by \eqref{eq:retraction} we obtain the parameter update rule
\begin{equation}
    \paramkp
= \paramk + \frac{h_k}{h_{k-1}}\beta_k\GXinvk\GXkkm(\paramk - \paramkm) - \alpha_k\Ginvk\LossGrad(\paramk).
\end{equation}
\end{definition}

\begin{remark}
When the space $W$ at $v$ is equipped with the inner product induced by the Hessian  $H_\cL(v)$, NHB (and subsequent variants) will be referred to as a Gauss-Newton NHB.
\end{remark}


\subsubsection{Quasi-Natural Heavy-Ball}
\label{sec:qnhb}

The previous approach stems from a functional perspective. However, the cost of calculating the cross-Gram matrix and the potential for estimation errors in low sample regimes may limit its advantages over a basic NGD update. In this context one could wonder if combining an NGD update with a classical parametric momentum update could  yield similar results without extra computational cost. Instead of the NHB momentum update given by \eqref{eq:nhb-momentum-update} we could simply have
\begin{equation}
    \pk = \beta_k \pkm - h_k \Ginvk\nabla L(\paramk) \label{eq:qnhb-momentum-update}
\end{equation}
leading to the following definition.
\begin{definition}[Quasi-Natural Heavy-Ball (QNHB)]
The quasi-natural Heavy-Ball method in function space can be written as
\begin{align*}
    \vkp = R[\vk + \frac{h_k }{h_{k-1}}\beta_k P^X_{\cT_{\vk}} \psikT (\paramk-\paramkm)  - \alpha_k \gradM \cL(\vk)].
\end{align*}
When we choose the retraction given by \eqref{eq:retraction} we obtain the parameter update rule
\begin{equation}
    \paramkp
= \paramk + \frac{h_k}{h_{k-1}}\beta_k(\paramk - \paramkm) - \alpha_k\Ginvk\LossGrad(\paramk).
\end{equation}
\end{definition}
From a functional perspective what we have done is to use $\psik$ instead of $\psikm$ in the definition of the momentum term replacing $\Pkm = \psikmT (\paramk-\paramkm) h_{k-1}^{-1}$ by $\Pkm = \psikT (\paramk-\paramkm)h_{k-1}^{-1}$, that is, interpreting the momentum $\Pkm $ as an element of the tangent space at $\vk = D(\paramk)$. This translates into an update rule where instead of $\GXinvk\GXkkm\pkm$ we have just $\pkm$. This supposes that $\GXinvk\GXkkm$ is close to identity which is reasonable in certain conditions.
This approach is closely related with recent momentum enhanced NGD methods for solving PDEs: the Kaczmarz inspired SPRING method \cite{Kaczmarz2024} and the Woodbury Energy Natural Gradient (ENGD-W) \cite{ImprovingEnergyNatural2025}.

We define the maximal curvature of the manifold $\cM$ at point $v=D(\theta)$ as
$$
\kappa_{max}(\theta) = \underset{q\in \R^d;\\ \|q\|_2=1}{\max} \|H_D(\theta)(q, q)\|_X
$$
recalling that $H_D(\theta)(\cdot, \cdot) : \mathbb{R}^d \times \mathbb{R}^d \to V$ is the parametric Hessian introduced in \Cref{sec:gauss-newton}.
\begin{proposition}
\label{prop:qnhb}
It holds
$$
 \|\psikT \overline{\pk} -\psikT \pk \|_X\leq h_{k-1} \beta_k \kappa_{max}(\paramk) \|\pkm\|_2^2 +  o(h_{k-1} \|\pkm\|_2^2)
$$
with $\overline{\pk}$ and  $\pk$ respectively given by \eqref{eq:qnhb-momentum-update} and \eqref{eq:nhb-momentum-update}.
\end{proposition}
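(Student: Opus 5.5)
Subtracting the two momentum updates \eqref{eq:qnhb-momentum-update} and \eqref{eq:nhb-momentum-update}, the gradient terms are identical and cancel. Only the momentum-transport parts remain:
$$
\psikT \overline{\pk} - \psikT \pk = \beta_k\bigl(\psikT \pkm - \psikT\GXinvk\GXkkm\pkm\bigr) = \beta_k\bigl(\psikT\pkm - P^X_{\cT_{\paramk}}\psikmT\pkm\bigr).
$$
Here we use the identity noted after \eqref{eq:nhb-update}: $\psikT\GXinvk\GXkkm$ applied to a vector acts as the $X$-orthogonal projection onto $\cTk$ of the corresponding element of $\cTkm$. Since $\psikT\pkm\in\cTk$, it equals its own projection. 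The difference therefore becomes $\beta_k P^X_{\cTk}\bigl((\psik-\psikm)^T\pkm\bigr)$. Because orthogonal projections have operator norm at most one in $X$, it suffices to bound $\beta_k\|(\psik-\psikm)^T\pkm\|_X$.

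Next, I would Taylor expand $\psi$ along the segment from $\paramkm$ to $\paramk = \paramkm + h_{k-1}\pkm$. This uses the relation $\pkm = (\paramk-\paramkm)/h_{k-1}$ from the derivation of \eqref{eq:nhb-update}. Since $\psi=\partial D/\partial\theta$, we have, for $q=\pkm$,
$$
(\psik-\psikm)^T q = H_D(\paramk)(\paramk-\paramkm, q) + o(\|\paramk-\paramkm\|_2)\|q\|_2 = h_{k-1}H_D(\paramk)(\pkm,\pkm) + o(h_{k-1}\|\pkm\|_2^2).
$$
Expanding around $\paramk$ rather than $\paramkm$ makes the curvature appear at $\paramk$, as in the statement. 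This step requires $D$ to be twice differentiable, in the Fréchet sense with values in $X$. I would state that assumption explicitly, and I would treat the little-o as $h_{k-1}\|\pkm\|_2\to0$.

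Finally, the leading term satisfies $\|H_D(\paramk)(\pkm,\pkm)\|_X \le \kappa_{max}(\paramk)\|\pkm\|_2^2$. This follows from homogeneity: write $\pkm=\|\pkm\|_2 q$ with $\|q\|_2=1$, use bilinearity of $H_D$, and apply the definition of $\kappa_{max}$. Multiplying by $\beta_k$ (assumed in $[0,1]$, or using $|\beta_k|$) and absorbing the remainder gives the claim.

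The main obstacle is the projection step. It requires recognizing that $\psikT\GXinvk\GXkkm$ acts as $P^X_{\cTk}$ even when $\GXk$ is singular, so that only the pseudo-inverse is available. I would verify this from the normal equations: $\psikT\GXinvk(\psik,w)_X$ is the minimal-norm least-squares representation of the projection of $w$. The other delicate point is controlling the Taylor remainder uniformly; for this I would invoke continuity of $H_D$ near $\paramk$.
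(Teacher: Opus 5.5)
Your proposal is correct and follows essentially the paper's argument: cancel the gradient terms, Taylor expand $\psi$ about $\paramk$ using $\paramkm=\paramk-h_{k-1}\pkm$, use that $P^X_{\cT_k}$ is non-expansive, and bound by $\kappa_{max}(\paramk)$ via homogeneity. The only difference is the order of steps: you apply the exact identity $\psikT\GXinvk\GXkkm\pkm=P^X_{\cT_k}\psikmT\pkm$ before expanding, whereas the paper expands $\GXkkm$ first (your order is slightly cleaner when $\GXk$ is singular), and your final curvature bound correctly carries $\|\pkm\|_2^2$ where the paper's last line has $\|\pkm\|_2$ by a typo.
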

We leave the proof to \Cref{sec:proof-qnhb}. From this we can conclude that NHB and QNHB methods follow a similar dynamics in flat or nearly-flat manifolds $\cM$ as $\kappa_{max}(\paramk) \approx 0$.

\subsubsection{Natural Heavy-Ball with functional difference}
\label{sec:nhb-fd}

Another way to avoid calculating the cross-Gram matrix in the NHB method is to approximate the momentum $\Pkm=\psikmT \pkm$ by $(\vk-\vkm)/h_{k-1}$ on the tangent space $\cT_k$, that is
$$
({\psik, \Pkm})_X \approx \frac{1}{h_{k-1}}({\psik, \vk-\vkm})_X.
$$
Replacing in \eqref{eq:momentum-tangent-space} we obtain the following momentum update

\begin{equation}
({\psik, \Pk})_X = \frac{\beta_k}{h_{k-1}} ({\psik, \vk-\vkm})_X - h_k ({\psik, \gk})_X, \label{eq:momentum-tangent-space-fd}
\end{equation}
or in terms of parameters, recalling that $\Pk=\psikT\pk$ and $\gk=\psikT\Ginvk\LossGrad(\paramk)$,
\begin{equation}
\GXk\pk = \frac{\beta_k}{h_{k-1}}z^{(k-1)} - h_k\GXk \Ginvk\LossGrad(\paramk). \label{eq:nhbfd-momentum-eq}
\end{equation}
with $z^{(k-1)}\coloneqq ({\psik, \vk-\vkm})_X.$
Discretizing \eqref{eq:functional-cP} as $v^{(k+1)} = D(\theta^{(k+1)})$ with $ \paramkp = \paramk + h_k \pk$,
we obtain the following natural Heavy-Ball functional difference (NHB-FD) momentum update rule
\begin{equation}
\pk = \frac{\beta_k}{h_{k-1}}\GXinvk z^{(k-1)} - h_k\Ginvk\LossGrad(\paramk) \label{eq:nhbfd-momentum-update}
\end{equation}
which leads to the following definition. 
\begin{definition}[Natural Heavy-Ball with Functional Difference (NHB-FD)]
The natural Heavy-Ball with functional difference method can be written in function space as
\begin{align*}
    \vkp = R[\vk + \frac{h_k }{h_{k-1}}\beta_k P^X_{\cT_{\vk}} (\vk-\vkm)  - \alpha_k \gradM \cL(\vk)],
\end{align*}
where the parameters  $\beta_k=1-h_k b_k$ and $\alpha_k=h_k^2=s_k$ are the same as in the classical algorithms.
When we choose the retraction given by \eqref{eq:retraction} we obtain the parameter update rule
\begin{equation}
    \paramkp
= \paramk + \frac{h_k}{h_{k-1}}\beta_k\GXinvk z^{(k-1)} - \alpha_k\Ginvk\LossGrad(\paramk), \label{eq:nhbfd-update}
\end{equation}
\end{definition}

This alternative version of natural momentum has the advantage of avoiding the storage at iteration $k$ of evaluations of the gradient $\psikm$ (or new evaluations in an active learning setting) required to estimate the cross-Gram matrix $\GXkkm$. Instead only two forward passes to the model $D(\paramk)$ and $D(\paramkm)$ are required to obtain the difference $\vk-\vkm$.

Finally, we can also quantify the difference between NHB and NHB-FD dynamics under similar assumptions as in \Cref{prop:qnhb}.
\begin{proposition}
\label{prop:qnhb-bis}
It holds
$$
\|\psikT \widehat{\pk}-\psikT \pk\|_X\leq \frac{h_{k-1}}{2} \beta_k \kappa_{max}(\paramkm) \|\pkm\|_2^2 { + o(h_{k-1}\|\pkm\|_2^2)}
$$
where   $\widehat{\pk}$ and  $\pk$ are given by \eqref{eq:nhbfd-momentum-update} and \eqref{eq:nhb-momentum-update} respectively.
\end{proposition}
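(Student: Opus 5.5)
Subtracting the two momentum updates, the gradient terms $-h_k\Ginvk\LossGrad(\paramk)$ coincide and cancel. What remains is the difference of the two momentum terms:
$$
\psikT\widehat{\pk}-\psikT\pk = \beta_k\,\psikT\GXinvk\Big(\tfrac{1}{h_{k-1}}z^{(k-1)} - \GXkkm\pkm\Big).
$$
Since $z^{(k-1)}=(\psik,\vk-\vkm)_X$ and $\GXkkm\pkm=(\psik,\psikmT\pkm)_X$, and $\psikT\GXinvk(\psik,\cdot)_X = P^X_{\cT_{\paramk}}$, this becomes
$$
\psikT\widehat{\pk}-\psikT\pk = \beta_k\, P^X_{\cT_{\paramk}}\Big(\tfrac{1}{h_{k-1}}(\vk-\vkm) - \psikmT\pkm\Big).
$$
Because an $X$-orthogonal projection has norm at most one, it suffices to bound $\|\frac{1}{h_{k-1}}(\vk-\vkm)-\psikmT\pkm\|_X$.

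For this I use a Taylor expansion of $D$ at $\paramkm$, with increment $\paramk-\paramkm=h_{k-1}\pkm$, which is the discrete update rule. Second-order expansion, as in \Cref{sec:gauss-newton}, gives
$$
\vk-\vkm = h_{k-1}\psikmT\pkm + \tfrac{h_{k-1}^2}{2}H_D(\paramkm)(\pkm,\pkm) + o(h_{k-1}^2\|\pkm\|_2^2).
$$
Dividing by $h_{k-1}$, the first-order term cancels $\psikmT\pkm$. The remainder is $\frac{h_{k-1}}{2}H_D(\paramkm)(\pkm,\pkm)+o(h_{k-1}\|\pkm\|_2^2)$. By homogeneity and the definition of $\kappa_{max}$,
$$
\|H_D(\paramkm)(\pkm,\pkm)\|_X\le \kappa_{max}(\paramkm)\|\pkm\|_2^2.
$$
Multiplying by $\beta_k$ (assumed nonnegative, as in the damped setting) gives the stated bound.

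The main point of care is the remainder term. The Taylor remainder must be measured in the $X$-norm, so I assume $D$ is twice Fréchet differentiable as a map into $X$. The expansion is then centered at $\paramkm$, which is why $\kappa_{max}(\paramkm)$ appears, in contrast with \Cref{prop:qnhb}. The factor $\tfrac12$ is the usual Taylor coefficient. There is no cross-Gram perturbation to control here, since the projection absorbs everything, so the argument is more direct than the one for \Cref{prop:qnhb}.
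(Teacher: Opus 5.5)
Your proof is correct and follows essentially the same route as the paper. Both arguments use a second-order Taylor expansion of $D$ at $\paramkm$ with increment $h_{k-1}\pkm$, cancel the first-order term against $\GXkkm\pkm$, and conclude using that the $X$-orthogonal projection has norm at most one together with the definition of $\kappa_{max}(\paramkm)$. The only cosmetic difference is the order of steps: you identify the projection $P^X_{\cT_{\paramk}}$ before expanding, whereas the paper first expands $z^{(k-1)}$ inside the inner product and recognizes the projection afterwards.
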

We leave the proof to \Cref{sec:proof-nhb-fd}. We note that the moments approximations introduced by NHB-FD and QNHB are of the same order.

\subsection{Natural Nesterov}
\label{sec:nesterov}

A functional version of the Nesterov method in the case of a linear space $\cM$ constructs two sequences $(\wk = D(\yk))$ and $(\vk = D(\paramk))$ in $\cM$ defined by
\begin{align*}
&\wk = \vk + \beta_k h_{k-1} \cP^{(k-1)} \\
&\vkp = \wk - \alpha_k \gradM \cL(\wk),
\end{align*}
with the momentum term $\cP^{(k-1)} = (\vk - \vkm) / {h_{k-1}}.$
In the case of a nonlinear manifold $\cM$, a natural Nesterov algorithm can be defined as
\begin{align*}
&\wk = R(\vk + \beta_k h_{k-1} P^X_{\cT_{\vk}} \cP^{(k-1)}) \\
&\vkp = R(\wk - \alpha_k \gradM \cL(\wk)),
\end{align*}
which involves a projection of the momentum  $\cP^{(k-1)}$  onto the tangent space $\cT_{\vk}$ and two retractions. Note that the gradient $\gradM\cL(\wk)$ is an element of $\cT_{\wk},$ which is the Riesz representer of $\cL'(\wk)$ in $\cT_{\wk}$.
This yields two variants depending on the interpretation of the momentum $\cP^{(k-1)}$ for a nonlinear manifold, either $${h_{k-1}} \cP^{(k-1)} = \vk - \vkm   = D(\paramk) - D(\paramkm)  \in \cM - \cM,$$ or
$$
{h_{k-1}} \cP^{(k-1)} = \psi(\paramkm)^T (\paramk-\paramkm) \in \cT_{\vkm}.
$$
Letting $d_k$ be such that $h_{k-1}P^X_{\cT_{\vk}} \cP^{(k-1)} = \psi(\paramk)^T d_k$, these two choices  correspond to \begin{align}
d_k &= G_X(\paramk)^\dagger G_X(\paramk,\paramkm) (\paramk-\paramkm) \label{eq:dk-nesterov} \quad \text{or} \\
d_k &= G_X(\paramk)^\dagger (\psi(\paramk) , \vk-\vkm)_X, \label{eq:dk-nesterov-fd}
\end{align}
where $G_X(\paramk)$ is the Gram matrix of $\psi(\paramk)$ and $G_X(\paramk,\paramkm)$ is the cross-Gram matrix of $\psi(\paramk)$ and $\psi(\paramkm)$, both with respect to the $X$-inner product.  The second variant is interesting in an active learning setting, since it does not require new evaluations of previous gradients $\psi(\paramkm)$.
\begin{definition}[Natural Nesterov I (NN-I and NN-I-FD)]
\label{def:nn-I}
We define sequences $(\vk = D(\paramk))$ and $(\wk=D(\yk))$ in $\cM$ such that
\begin{align*}
&\wk = R(\vk + \beta_k h_{k-1}    P^X_{\cT_{\vk}}\cP^{(k-1)})    \\
&\vkp = R(\wk - \alpha_k \gradM \cL(\wk)).
\end{align*}
Letting $h_{k-1}P^X_{\cT_{\vk}}\cP^{(k-1)} = \psi(\paramk)^T d_k$ with $d_k$ as in \eqref{eq:dk-nesterov} or \eqref{eq:dk-nesterov-fd}, and choosing
the retraction \eqref{eq:retraction}, it yields the update rule
\begin{align*}
\yk &= \paramk + \beta_k d_k
\\
\paramkp &= \yk - \alpha_k G(\yk)^\dagger \nabla L(\yk).
\end{align*}
\end{definition}
The above algorithms involve retractions at two different points $\vk$ and $\wk$ in the manifold, which requires the computation of pseudo-inverses $G_X(\paramk)^\dagger$ and $G(\yk)^\dagger$. In the case where $X$ and $W$ metrics are the same, we would like to avoid computing two such matrices. For that, we will only consider retractions at point $\vk$, which requires replacing $\gradM \cL(\wk) \in \cT_{\wk}$ by the Riesz representer of $\cL'(\wk)$ in $\cT_{\vk}$, which is  $\mathcal{R}_{\cT_{\vk}} \cL'(\wk)_{\mid \cT_{\vk}} = \psi(\paramk)^T G(\paramk)^\dagger \cL'(\wk)(\psi(\paramk))$. This is feasible when $W \subset V$.
This yields the following definition. 
\begin{definition}[Natural Nesterov II (NN-II and NN-II-FD)]
We define sequences $(\vk = D(\paramk))$ and $(\wk=D(\yk))$ in $\cM$ such that
\begin{align*}
&\Delta_k = \beta_k  h_{k-1}    P^X_{\cT_{\vk}}\cP^{(k-1)} \\
&\wk = R(\vk + \Delta_k )    \\
&\vkp = R(\vk + \Delta_k - \alpha_k \mathcal{R}_{\cT_{\vk}} \cL'(\wk)).
\end{align*}
Letting $h_{k-1}P^X_{\cT_{\vk}}\cP^{(k-1)} = \psi(\paramk)^T d_k$ with $d_k$ as in \eqref{eq:dk-nesterov} or \eqref{eq:dk-nesterov-fd}, and choosing
the retraction \eqref{eq:retraction}, it yields the update rule
\begin{align*}
\yk &= \paramk + \beta_k d_k
\\
\paramkp &= \yk - \alpha_k G(\paramk)^\dagger \cL'(\wk)(\psi(\paramk)).
\end{align*}
\end{definition}

Let us emphasize that all variants introduced boil down to classical Nesterov in the case where $\cM$ is a linear space.

\begin{remark}
\label{rmk:nesterov-tangent}
An alternative approach to avoid two retractions is to replace $\cL'(\wk)$ with $\cL'(\vk + \Delta_k)$. However, this requires evaluating $\cL'$ outside the manifold. Although this offers negligible benefit with the retraction in \eqref{eq:retraction}, it becomes advantageous when working with computationally expensive retractions.
\end{remark}

\begin{remark}
\label{rmk:quasi-nesterov}
 Another possibility is to define ${h_{k-1}} \cP^{(k-1)} = \psi(\paramk)^T (\paramk-\paramkm) \in \cT_{\vk}$ instead of ${h_{k-1}} \cP^{(k-1)} = \psi(\paramkm)^T (\paramk-\paramkm) \in \cT_{\vkm}$ leading to computationally efficient versions of natural Nesterov with features similar to QNHB. One obtains an $\yk$-update rule
 $$\yk=\paramk + \beta_k(\paramk-\paramkm)$$
 that does not require the computation of the cross-Gram matrix neither the inversion of the Gram matrix. However, in our numerical experiments, this alternative version had the tendency to converge  more slowly or even diverge unless one uses a smaller value for $\beta_k$ deviating from the Nesterov one.
\end{remark}

\begin{remark}
When the space $W$ at $v$ is equipped with the inner product induced by the Hessian  $H_\cL(v)$, NN algorithms  will be referred to as Gauss-Newton NN algorithms.
\end{remark}

\section{Numerical experiments}
\label{sec:numerics}

In this section we compare NGD with the proposed natural momentum methods. The first two examples are benchmarks
in which the use of NGD was already shown to be better than classical gradient descent with momentum \cite{parkAdaptiveNaturalGradient2000} so we limit the comparison to the former. The last two experiments show the advantage of using natural momentum strategies over NGD in the context of Physics Informed Learning where the task is to solve a partial differential equation (PDE) by minimizing its residual. In this case the advantage of using NGD over L-BFGS or classical momentum strategies has been shown in \cite{müllerAchievingHighAccuracy2023,schwenckeANaGRAMNaturalGradient2024,müllerPositionOptimizationSciML2024}.

The approximation class $\cM$ for the first two examples corresponds to the class of shallow neural network with $10$ neurons while for the last two we use a two hidden layer neural network with $5$ neurons in each layer. The experiments are repeated $10$ times for each optimization method varying the parameter initialization randomly following a standard normal distribution. For each problem the training is stopped when a given error threshold is reached or when the number of iterations exceeds a certain prescribed value.

In the first, third and fourth examples, after a possible reformulation, the problem is formulated as a least-squares approximation problem in $V = L^2_\mu$ and we use spaces $W=L^2_\mu$ and $ X = L^2_\mu$. In the second example, the loss function is the cross-entropy and we consider for $X$ the space $L^2_\mu$ and for $W$  either $L^2_\mu$ or the normed space induced by the Hessian $H_\cL$. In the latter case,  the NGD is referred to Gauss-Newton NGD (GN NGD), while our accelerated methods are referred to Gauss-Newton variants of Heavy-Ball or Nesterov.

\subsection{Practical considerations}

Choosing the learning rate and the momentum term for a given problem is a problem in itself. In \cite{parkAdaptiveNaturalGradient2000} the authors searched the values that gave the fastest convergence and highest success rate, however, they didn't specify the exact methodology to achieve those values. In \cite{liAcceleratedNaturalGradient2025} the authors compare their proposed strategies to existing ones by choosing the best one given a grid search between two to four values selected in advance with no clear explanation of why those and not others.

In our experiments, we found that using gradient norm clipping as in \cite{gradclipping2013} such that
$$\alpha_k=\min \left\{ \frac{1}{\| \gk \|}, 0.1 \right\},$$
with $\gk$ the direction in parameter space of the gradient term (e.g. for NGD or NHB one has $\gk=\Ginvk\LossGrad(\paramk)$),
gives convergence dynamics that are relatively fast with none or just occasional overshoots while not requiring any specific method for searching a good learning rate. The drawback is that at the beginning the dynamics is slower than with optimized learning rates, which gives another reason to use momentum methods to accelerate it.
\newline

The inertial term for all heavy-ball methods was fixed at $\beta_k=0.5$ for all experiments. We observed that although sometimes higher values might give faster dynamics, it was sometimes at the expense of instabilities or oscillations.
\newline

The momentum terms presented here require the resolution of specific system of equations, mainly: the functional momentum update $\pk=\GXinvk \GXkkm \pkm=\GXinvk ({\psik, \psikmT \pkm})_X$ and the functional difference update $\pk=\GXinvk ({\psik, \vk-\vkm})_X$. As we have seen this was originally obtained from the system
$$\GXk\pk = ({\psik, \psikT})_X\pk = ({\psik, \Pkm})_X$$
with either $\Pkm=\psikmT \pkm$ or $\Pkm=\vk-\vkm$. This is equivalent to solve the following minimization problem
\begin{equation}
\pk = \argmin_{c\in\R^d} \|\Pkm-\psikT c\|^2_X \label{eq-lsqrt}
\end{equation}
which is nothing but the $X$-orthogonal projection of $\Pkm$ onto $\cT_\vk$. In practice one will use the empirical inner products, but, as long as the same quadrature method is used to estimate $\GXk$ and $({\psik, \Pkm})_X$ we can use an efficient least-squares solver for \eqref{eq-lsqrt}. Furthermore, when $W=X$ we can do the same for the gradient term. This was the setting for our numerical experiments except for the classification problem when using the metric inherited by the Hessian of the loss $H_\cL$ for the Gauss-Newton methods.
\newline

To solve the systems of equations involving  Gram  matrices, we used as regularization method the \textit{spectral flooring}, see \Cref{sec:nat-grad}.


\subsection{Mackey-Glass}
Following \cite{parkAdaptiveNaturalGradient2000}  we first compare NGD with the momentum variants introduced in this paper on a regression task consisting on learning the Mackey-Glass chaotic time series generated from the equation
$$z(t+1)=(1-b)z(t)+a\frac{z(t-\tau)}{1+z(t-\tau)^{10}}$$
with $a=0.2$, $b=0.1$ and $\tau=17$. The model class $\cM$ consists of a shallow neural network with $10$ hidden units and the sigmoid activation function. The input space is $\R^4$ given by the present and past values of the dynamical system $x=(z(t), z(t-6), z(t-12), z(t-18))$. The task is then to predict the future value $y=z(t+6)$. We used $500$ data pairs $(x_i, y_i)_{i=1}^{500}$ taken in the interval of $t=[200, 700]$ for training and another $500$ on $t=[5000, 5500]$ for testing. The loss is the mean square error (MSE). We stopped when the MSE reached $2\times10^{-5}$. In \cref{fig:makey_convergence} we show the median per iteration of $50$ different realizations of the experiment.

The first observation is that the momentum methods reach the minimal MSE in less than half the number of iterations as NGD and also in less than half the time. We also see that the approximate natural Heavy-Ball methods NHB-FD and QNHB are far behind NHB in terms of iterations and time while still being faster than NGD by a wide margin. The NN-II (using $d_k$ as in \eqref{eq:dk-nesterov}) shows even faster acceleration than NHB, however, the convergence is slowed down by the oscillating behavior caused by the vanishing damping term (recall that $\beta_k \approx 1-3/k$ for Nesterov) as it does not arrive to compensate the gained velocity. The alternative method NN-II-FD, which is based on $d_k$ from \eqref{eq:dk-nesterov-fd}, shows a behavior similar to NN-II in the first iterations. However, after $20$ iterations it deviates presenting a slowed convergence curve similar to NGD in time and iteration's number. We leave the comparison with NN-I to \Cref{sec:appendix-nesterov}, which has a similar error-iterations trend as NN-I while being twice computationally intensive arriving to convergence in times comparable to NGD.

\begin{figure}[ht]
    \centering
    \includegraphics[width=0.45\linewidth]{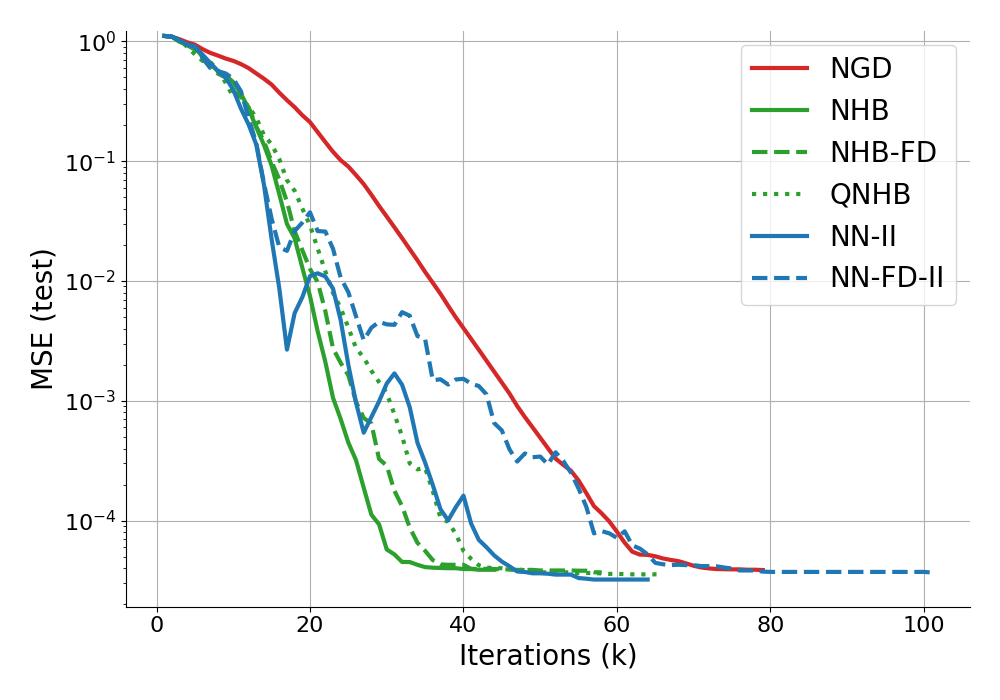}
    \includegraphics[width=0.45\linewidth]{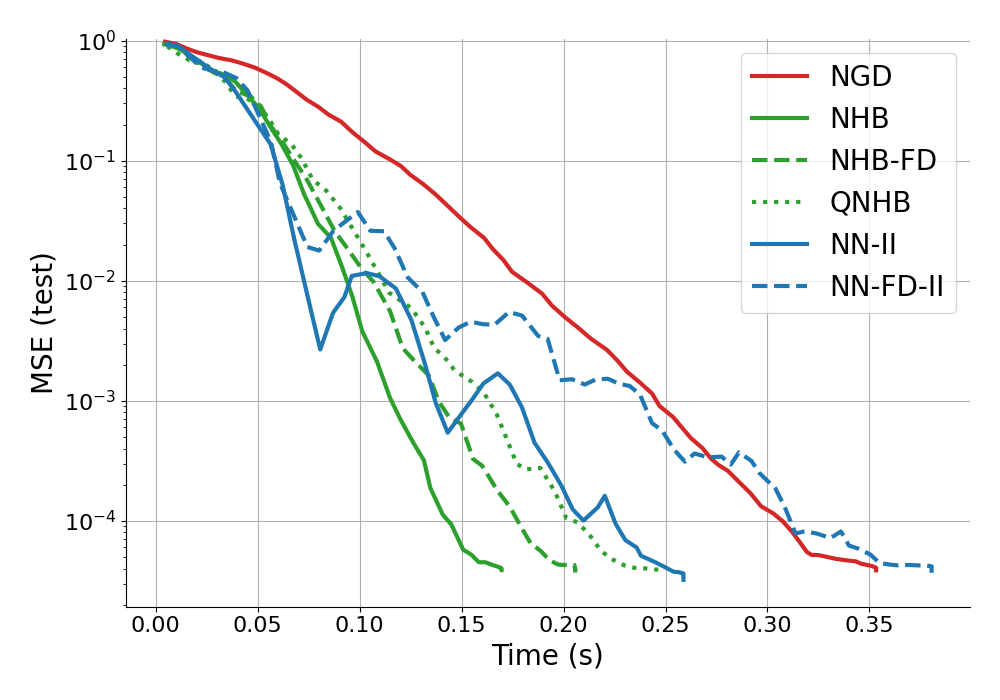}
    \caption{Convergence trends for the Mackey-Glass problem as a function of the number of iterations (left) or the time in seconds (right). In (red) the baseline method NGD. In (green) the NHB (solid line), NHB-FD (dashed line) and QNHB (dotted line). In (blue) the NN-II (solid line), NN-II-FD (dashed line).}
    \label{fig:makey_convergence}
\end{figure}

\subsection{Classification task}
Also following \cite{parkAdaptiveNaturalGradient2000} we compare the methods in the extended exclusive OR classification task which consists in separating the two classes arranged in a pattern as shown in \cref{fig:or-dataset}. Each of the nine clusters was generated following a standard Gaussian distribution. For training we pick $1800$ data points taking $200$ elements from each cluster so that classes are balanced. We pick $900$ for testing in a similar fashion. For this problem the learning process was stopped when the MSE was $3\times10^{-2}$ although we used for training the cross-entropy loss more appropriate for classification tasks. Here we also obtain an acceleration both in number of iterations (around half) and time (around $20\%$ less than NGD) when we use momentum methods. Line search methods reach faster the target loss in terms of iterations but take the double of time to converge.

In \Cref{fig:or-convergence}  we first note the inclusion of two baseline models: the Gauss-Newton NGD (GN NGD), which uses for $W$ the inner product induced by $H_\cL$ for obtaining the Riesz representer of $\cL'(\vk)$ in $\cT_\vk$, and NGD with $W$ begin the space $L^2_\mu$. We observe that taking into account $H_\cL$ effectively improves the learning process. In the same figure, we included the momentum methods that accelerate the NGD baseline with $X=W = L^2_\mu$. We see that all the natural heavy-ball versions behave similarly effectively accelerating the NGD dynamics. NN-II yields the fastest trend matching GN NGD at convergence in time and number of iterations. However, NN-II-FD shows a divergent pattern that could be avoided if we modify the Nesterov momentum constant, for example by setting $\beta_k \to \tilde \beta_k = \beta_k/2$, at the expense of slowing down the dynamics with respect to NN-II.

In \Cref{fig:gn-or-convergence} we compare Gauss-Newton NGD (GN NGD) with the Gauss-Newton variants of Heavy-Ball or Nesterov algorithms (GN NHB or GN NN) using also for $W$ the metric induced by the Hessian $H_\cL$.  We observe that all Gauss-Newton natural momentum algorithms effectively accelerate, though by a smaller margin, the learning dynamics of GN NGD, with GN NHB and GN NN-II having the steepest descent. Here we do not observe the blow-up of GN NN-II-FD.

\begin{figure}[ht]
    \centering
    \includegraphics[width=0.35\linewidth]{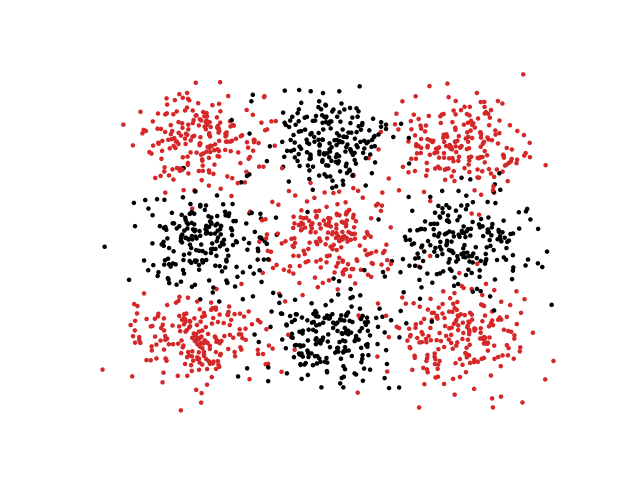}
    \caption{Extended exclusive OR dataset showing the two intertwined classes.}
    \label{fig:or-dataset}
\end{figure}

\begin{figure}[ht]
    \centering
    \includegraphics[width=0.45\linewidth]{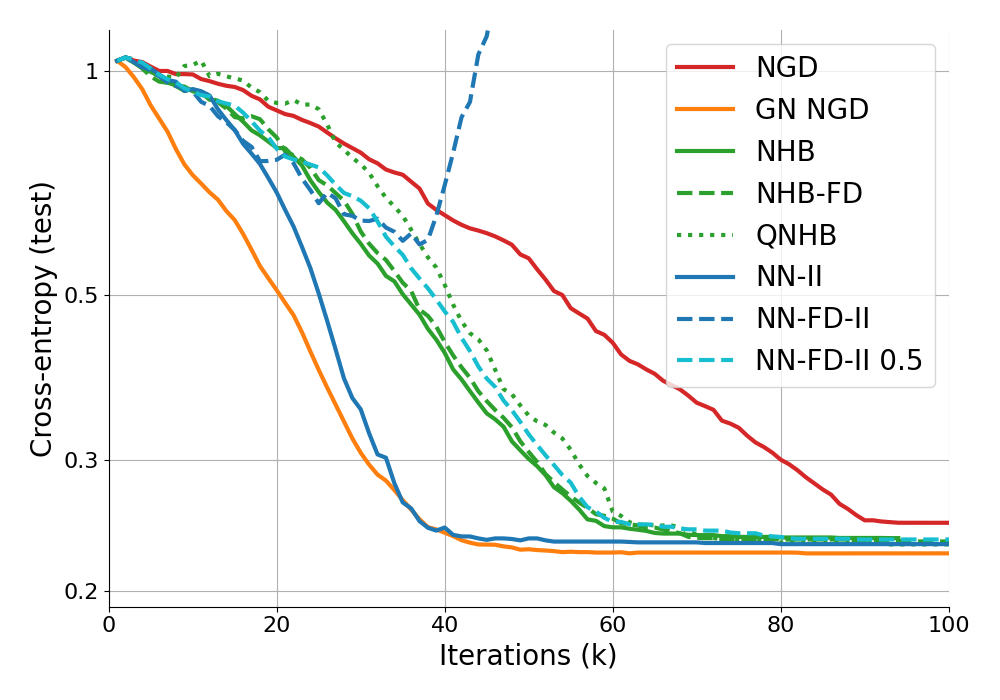}
    \includegraphics[width=0.45\linewidth]{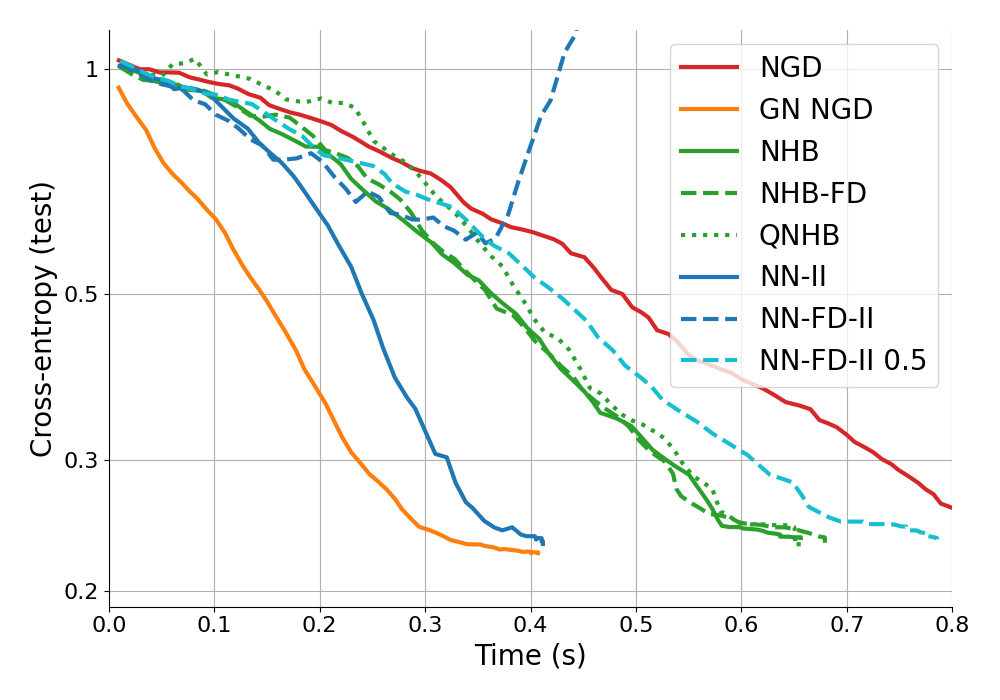}
        \caption{Convergence trends for the extended exclusive OR problem as a function of iterations (left) and time in seconds (right). The baseline methods are NGD (red) and Gauss-Newton NGD (GN-NGD) (orange). Here we show the natural momentum methods that use $W=X$. This methods can be seen as effectively accelerating the NGD baseline. In (green) the NHB (solid line), NHB-FD (dashed line) and QNHB (dotted line). In (blue) the NN-II (solid line), NN-II-FD (dashed line). In (cyan) NN-II-FD 0.5 (dashed line) which uses the modified Nesterov momentum constant $\tilde \beta_k = \beta_k/2$.}
    \label{fig:or-convergence}
\end{figure}

\begin{figure}[ht]
    \centering
    \includegraphics[width=0.45\linewidth]{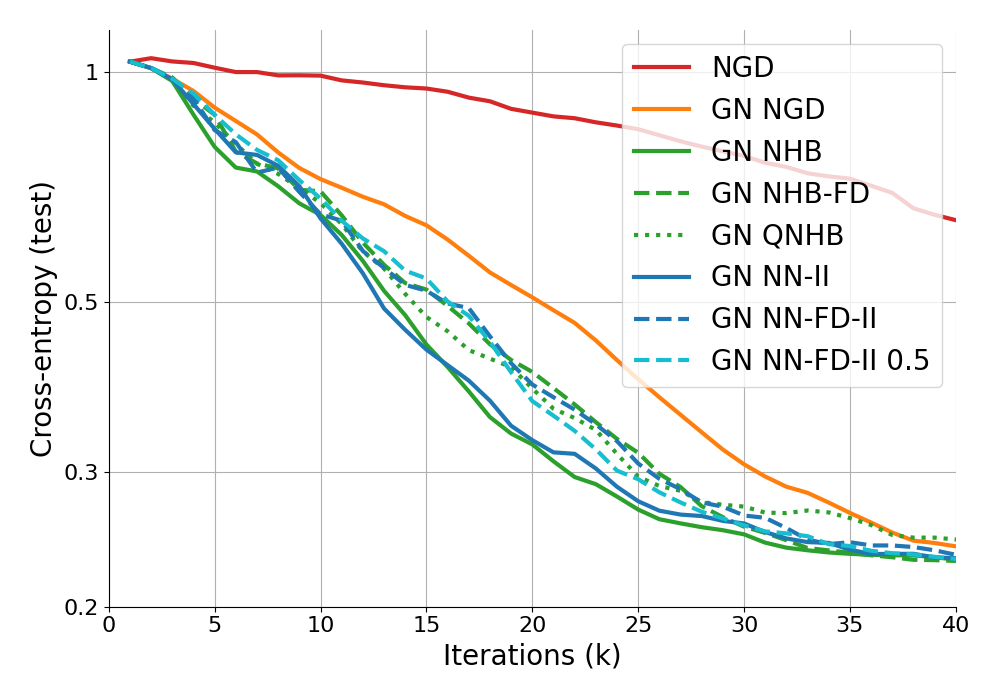}
    \includegraphics[width=0.45\linewidth]{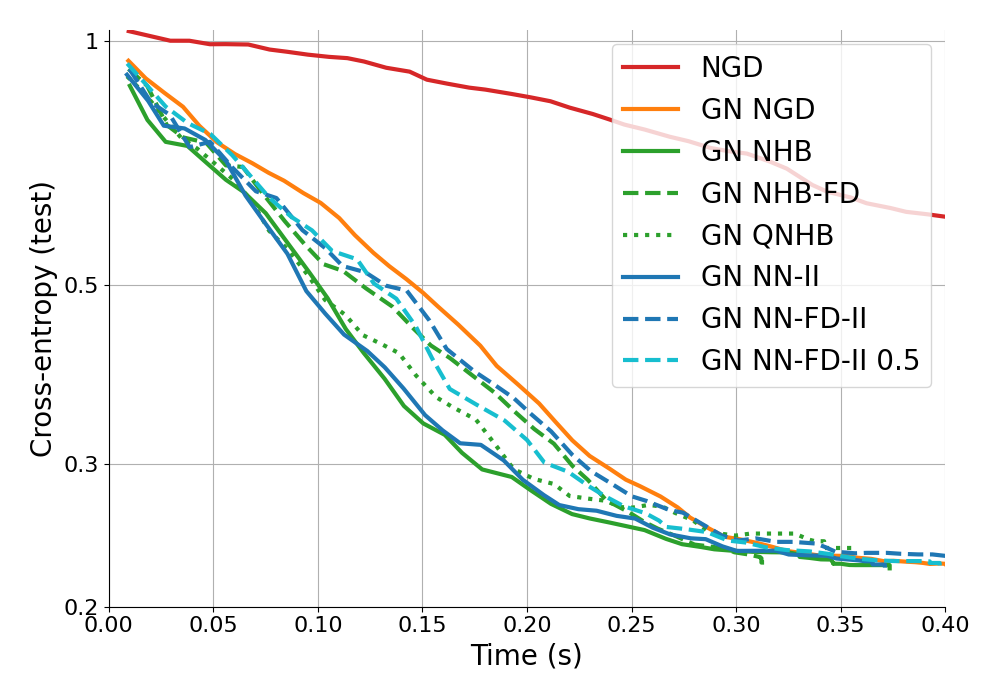}
        \caption{Convergence trends for the extended exclusive OR problem as a function of iterations (left) and time in seconds (right). The baseline methods are NGD (red) and Gauss-Newton NGD (GN NGD) (orange). Here we show the natural momentum methods that use the Hessian of the loss $H_\cL$ as the metric for obtaining the gradient Riesz representer. This methods can be seen as effectively accelerating the GN NGD baseline. In (green) the GN NHB (solid line), GN NHB-FD (dashed line) and GN QNHB (dotted line). In (blue) the GN NN-II (solid line), GN NN-II-FD (dashed line). In (cyan) GN NN-II-FD 0.5 (dashed line) which uses the modified Nesterov momentum constant $\tilde \beta_k = \beta_k/2$.}
    \label{fig:gn-or-convergence}
\end{figure}


\subsection{Physics informed learning}

The following two examples consist on the problem of finding the solution to a partial differential equation (PDE) by minimizing the strong form of its residual. That is, given a PDE
$$A(\u)(x)=f(x), \quad  x\in \Omega,$$
with boundary conditions, we want to find the element of the approximation manifold $\cM$ such that
\begin{equation}
{\colorM{v^*}} \in \argmin_{\v\in \cM} \|Au-f\|^2_{L^2(\Omega)}.
\label{eq:residual}
\end{equation}
Here the boundary conditions are satisfied by elements in $\cM$ (by modifying the output of the neural network accordingly). We can then consider instead of  a neural network as our model class $\cM$ the augmented class
$$\widetilde \cM = \{Av: v\in \cM\}.$$
In this case we have that the optimization problem is a minimization in $V = L^2(\Omega)$
$$A(v^*) \in \argmin_{w\in \widetilde \cM} \|\w-f\|^2_{L^2(\Omega)}.$$

\subsubsection{A linear advection-diffusion equation}

This example consists of a 1d linear advection diffusion equation with PDE operator given by
$$A(\u)(x)=-\mu\frac{\partial^2 \u}{\partial x^2}(x) + \frac{\partial \u}{\partial x}(x) = 1 =: f(x)$$
in the interval $\Omega=[0,1]$ with Dirichlet boundary conditions equal to zero at $x=0$ and $x=1$. The exact solution in this case is
$$\u(x)=-\frac{e^{\frac{1}{\mu}}}{e^{\frac{1}{\mu}}-1} + x + e^{\frac{1}{\mu}}-1$$
with $\mu=0.2$ in our experiments.

We use a two hidden layer neural network with $5$ neurons in each layer and hyperbolic tangent activation function. To enforce the boundary conditions we multiply the output of the neural network by $x(1-x)$. For training we use $1000$ equispaced collocation points in the domain $\Omega=[0,1]$ and $10000$ for testing.
In this example we also see that adding momentum helped to accelerate the convergence cutting by around half the number of iterations and time required to reach convergence as we see in \cref{fig:pinn_reacdiff_convergence}.

\begin{figure}[ht]
    \centering
    \includegraphics[width=0.45\linewidth]{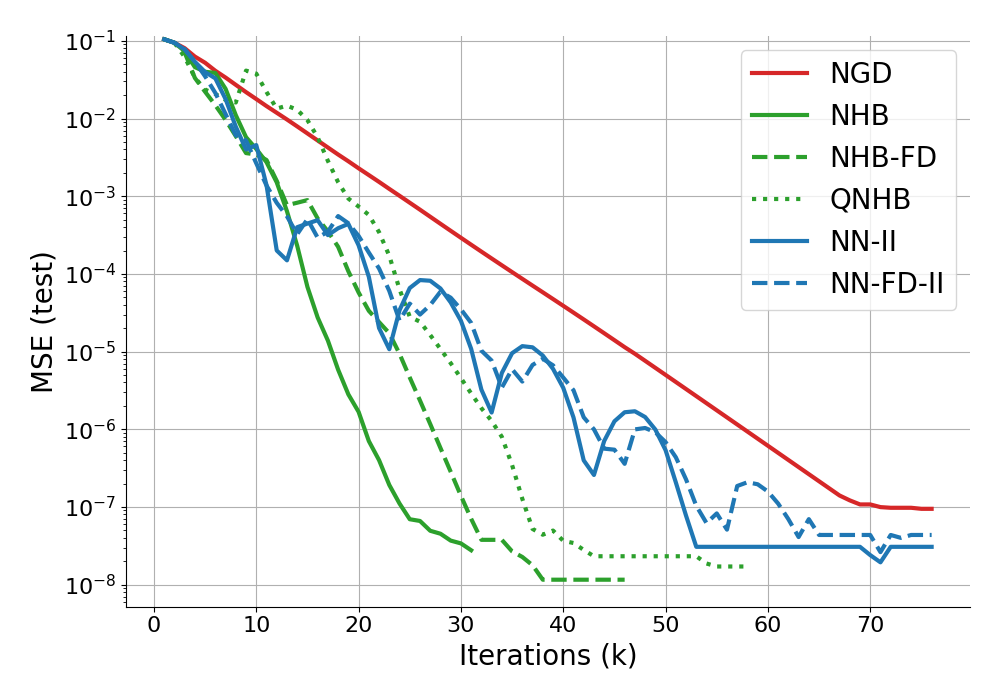}
    \includegraphics[width=0.45\linewidth]{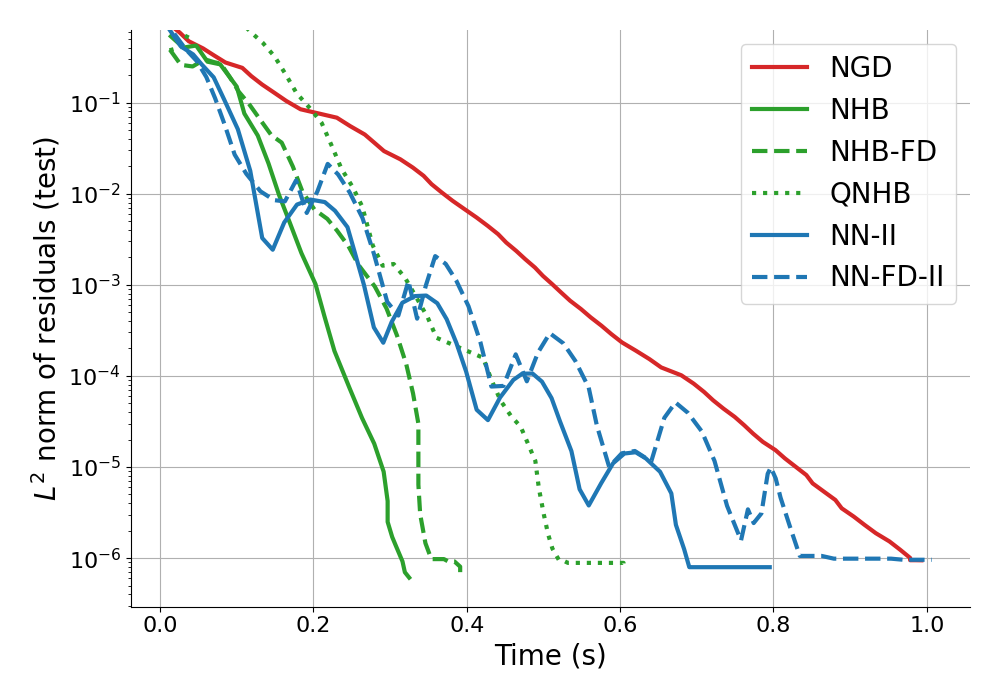}
    \caption{Convergence trends for the linear advection diffusion PDE. The MSE of predictions compared with the ground truth as a function of the number of iterations (left); the $L^2$ norm of residuals with respect to computational time (right). In (red) the baseline method NGD. In (green) the NHB (solid line), NHB-FD (dashed line) and QNHB (dotted line). In (blue) the NN-II (solid line), NN-II-FD (dashed line).}
    \label{fig:pinn_reacdiff_convergence}
\end{figure}

\subsubsection{A non linear reaction diffusion equation}
This is also a 1d example with equation
$$A(\u)(x)\coloneqq-\frac{\partial^2 \u}{\partial x^2}(x) + \u^3(x) = \pi^2\cos(\pi x)-\cos^3(\pi x) =: f(x)$$
in the interval $\Omega=[-1,1]$ with Newman boundary conditions
$$\frac{\partial \u}{\partial x} \Big\rvert_{x=\{-1,1\}}=0.$$
Then the exact solution  is $\u(x)=\cos(\pi x).$

We also use a two hidden layer neural network $\cN\cN_\gamma$ with $5$ neurons in each layer and hyperbolic tangent activation function, where $\gamma$ are the parameters of the neural network. To enforce the boundary conditions we add two more parameters $c, a\in\R$ and transform the output of the network as follows
$$D(\params)(x) = \cN\cN_\gamma(x)(1-x)^2(1+x)^2+c+a\left(\frac{x^3}{3}-x\right),$$
with $\params=(\gamma, a, c)$.
For training we use $1000$ equi-spaced collocation points in the domain $\Omega=[-1,1]$ and $10000$ for testing.
In this example we also see that adding momentum helped to accelerate the convergence. However, the Nesterov versions diverged or behaved similar to NGD unless we modified the momentum constant by setting $\beta_k \to \tilde \beta_k = 0.75\beta_k$ (see \cref{fig:pinn_nonlinear_convergence}).

\begin{figure}[ht]
    \centering
    \includegraphics[width=0.45\linewidth]{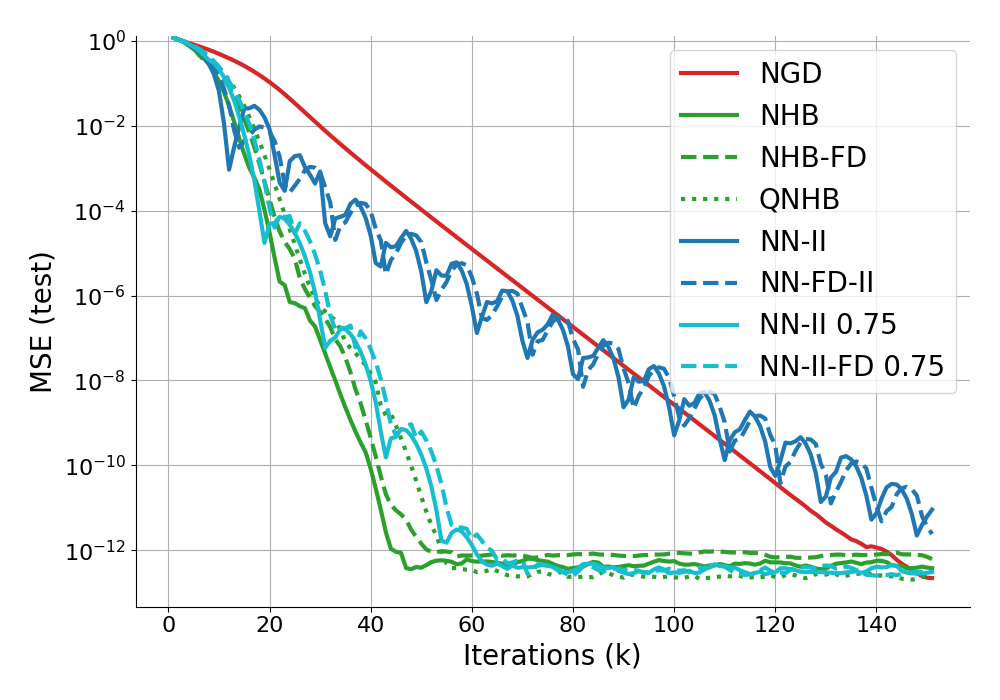}
    \includegraphics[width=0.45\linewidth]{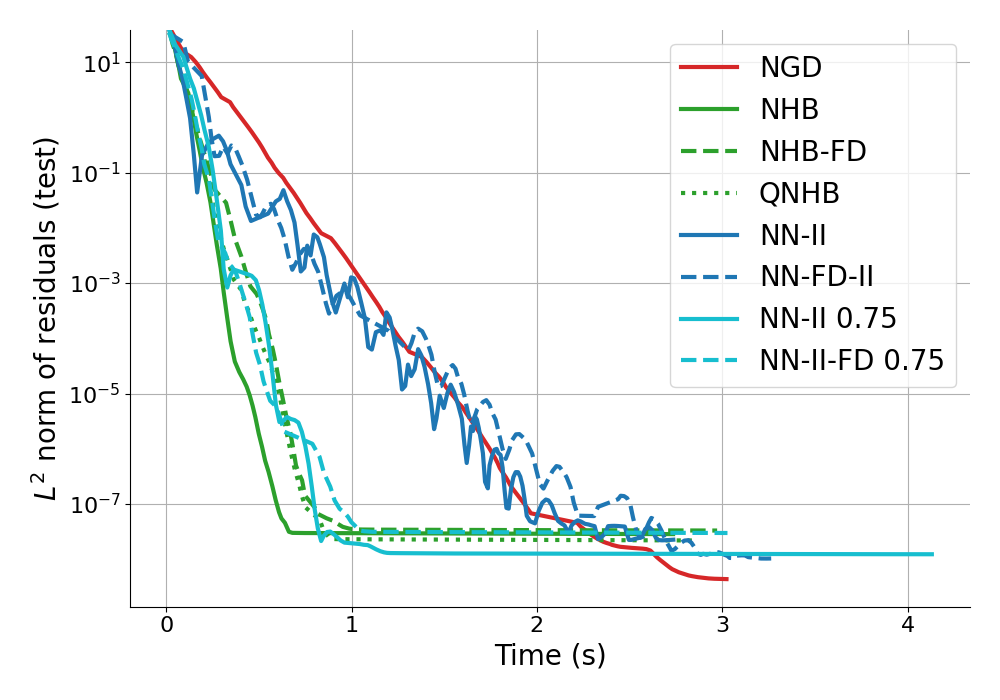}
    \caption{Convergence trends for the non linear advection diffusion PDE. The MSE of predictions compared with the ground truth as a function of the number of iterations (left); the $L^2$ norm of residuals with respect to computational time (right). In (red) the baseline method NGD. In (green) the NHB (solid line), NHB-FD (dashed line) and QNHB (dotted line). In (blue) the NN-II (solid line), NN-II-FD (dashed line). In (cyan) NN-II 0.5 (solid line) and NN-II-FD 0.5 (dashed line) both using the modified Nesterov momentum constant $\tilde \beta_k = 0.75\beta_k$.}
    \label{fig:pinn_nonlinear_convergence}
\end{figure}

\section{Conclusion}

In this work we have seen how to derive in a principled way functional versions of classical momentum optimization algorithms like Heavy-Ball and Nesterov. We have also proposed approximations and alternative formulations some of which are linked to existing natural versions of momentum. Furthermore, we have tested in a variety of settings how these methods can effectively accelerate the convergence of natural gradient descent both in term of number of iterations and computational time.

There are, however, many questions that are left for future analysis. One open question is related to finding a more balanced and theoretically based choice for the learning rate $\alpha_k$ and momentum constant $\beta_k$, for example by leveraging curvature information. Another question is how do these algorithms behave or potentially degrade when we pass to a stochastic setting with reduced batch sizes. Further testing these methods in more complex problems or generalizing these ideas to vector value functions (e.g. multiple classes classification problems) can be another direction to follow.


\FloatBarrier

\section{References}
\printbibliography[heading=none]


\FloatBarrier
\appendix

\section{Proof of \Cref{prop:qnhb} (
Quasi-Natural Heavy-Ball approximation)}
\label{sec:proof-qnhb}
First we develop
\begin{align}
\|\psikT\overline{\pk}-\psikT\pk\|_X &= \beta_k\|\psikT\pkm-\psikT\GXinvk\GXkkm\pkm\|_X \notag\\
&= \beta_k\|\psikT(I-\GXinvk\GXkkm) \pkm\|_X \label{eq:IGG}
\end{align}
where $h_k \Ginvk\nabla L(\paramk)$ cancels out as both methods have this same natural gradient update term. Recalling that $\GXkkm=({\psik, \psikmT})_X$ and using a Taylor expansion, we obtain
$$\psikm = \psi(\paramk - h_{k-1}\pkm) = \psik - h_{k-1}\Hk  + o(h_{k-1}\|\pkm\|_2),$$
with $\Hk\coloneqq H_D(\paramk)(\cdot, \pkm)\in V^d$ such that $\Hk^T \pkm = H_D(\paramk)(\pkm, \pkm)$, we obtain
\begin{align*}
\GXkkm
&=({\psik,\psikmT})_X\\
&=({\psik, \psikT - h_{k-1}\Hk^T)_X + o(h_{k-1}\|\pkm\|_2})\\
&=\GXk - h_{k-1}({\psik, \Hk^T})_X + o(h_{k-1}\|\pkm\|_2).
\end{align*}
Now we can write
\begin{align*}
    \psikT\left(I-\GXinvk\GXkkm\right)\pkm
    &=  h_{k-1}\psikT\GXinvk({\psik,\Hk^T \pkm})_X + o(h_{k-1}\|\pkm\|^2_2) \\
    &= h_{k-1} P_{\cT_k}^X H_D(\paramk)(\pkm,\pkm)  + o(h_{k-1}\|\pkm\|^2_2).
\end{align*}
Therefore
\begin{align*}
    \|\psikT(I-\GXinvk\GXkkm) \pkm\|_X
   &=  h_{k-1} \Vert P_{\cT_k}^X \Hk^T \pkm \Vert_X +  o(h_{k-1}\|\pkm\|^2_2) \\
   &\le h_{k-1} \Kmax \|\pkm\|^2_2 + o(h_{k-1} \|\pkm\|^2_2),
\end{align*}
where we have used the fact that $P_{\cT_k}^X$ is an orthogonal projection and  $\| \Hk^T \pkm \|_X = \|H_D(\paramk)(\pkm, \pkm)\|_X \leq \Kmax \|\pkm\|_2$.

\section{Proof of \Cref{prop:qnhb-bis} (Natural Heavy-Ball with Functional Difference approximation)}
\label{sec:proof-nhb-fd}
We have
\begin{align*}
    \vk &= D(\paramkm + h_{k-1}\pkm  ) \\
    &= \vkm + h_{k-1}\psikmT\pkm + \frac{h_{k-1}^2}{2}H_D(\paramkm)(\pkm,\pkm) +  o(h_{k-1}^2\|\pkm\|_2^2).
\end{align*}
Then, letting $\Delta_k  \coloneqq H_D(\paramkm)(\pkm,\pkm) \in V$, we have
\begin{align*}
    z^{(k-1)}&=({\psik, \vk-\vkm})_X\\
    &=({\psik, h_{k-1}\psikmT\pkm + \frac{h_{k-1}^2}{2}\Delta_k {  + o(h_{k-1}^2\|\pkm\|_2^2)}})_X\\
    &=h_{k-1}({\psik, \psikmT})_X\pkm + \frac{h_{k-1}^2}{2}({\psik, \Delta_k})_X {  +    o(h_{k-1}^2\|\pkm\|_2^2)}\\
    &=h_{k-1}\GXkkm\pkm + \frac{h_{k-1}^2}{2}({\psik, \Delta_k})_X {  +   o(h_{k-1}^1\|\pkm\|_2^2)}.
\end{align*}
Inserting this in the update difference, we obtain
\begin{align*}
\left\|\psikT\widehat{\pk}-\psikT\pk\right\|_X
&= \beta_k\left\|\psikT\GXinvk \frac{z^{(k-1)}}{h_{k-1}}-\psikT\GXinvk\GXkkm\pkm\right\|_X\\
&= \beta_k\left\|\psikT\GXinvk\left(\frac{z^{(k-1)}}{h_{k-1}}-\GXkkm\pkm\right)\right\|_X\\
&= \beta_k\left\|\frac{h_{k-1}}{2}\psikT\GXinvk ({\psik,\Delta_k})_X+o(h_{k-1}\|\pkm\|_2^2)  \right\|_X \\
&= \beta_k \frac{h_{k-1}}{2} \| P^X_{\cT_k}\Delta_k \|_X +   o(h_{k-1}\|\pkm\|_2^2)    \\
&\le \beta_k \frac{h_{k-1}}{2} \kappa_{max}(\paramkm) \| \pkm\|_2^2 + o(h_{k-1}\|\pkm\|_2^2)
\end{align*}
where we have used the fact that $P^X_{\cT_k}$ is an orthogonal projection and $\|\Delta_{k}\|_X = \|H_D(\paramkm)(\pkm, \pkm)\|_X \leq \kappa_{max}(\paramkm) \| \pkm\|_2^2$.

\section{Comparison between natural Nesterov alternative formulations}
\label{sec:appendix-nesterov}

We did not include the first proposed version of natural Nesterov \Cref{def:nn-I} NN-I to resent more readable plots and because its convergence time was never better than NGD as we can see in the following figures: \Cref{fig:makey-nesterov,fig:or-nesterov,fig:gn-or-nesterov}. This time we also include the natural Nesterov versions stemming from \Cref{rmk:quasi-nesterov} that we name QNN-I and QNN-II. Except in the classification problem when using $H_\cL$ as metric for the gradient term where NN-II blows-up, NN-II behaves in terms of iterations to convergence similarly as NN-I but taking around the double in computational time. The alternative variants from \Cref{rmk:quasi-nesterov} blow-up or take even more iteration to convergence than the function difference variants. All of this suggests the use of NN-II as the preferred method.

\begin{figure}[ht]
    \centering
    \includegraphics[width=0.45\linewidth]{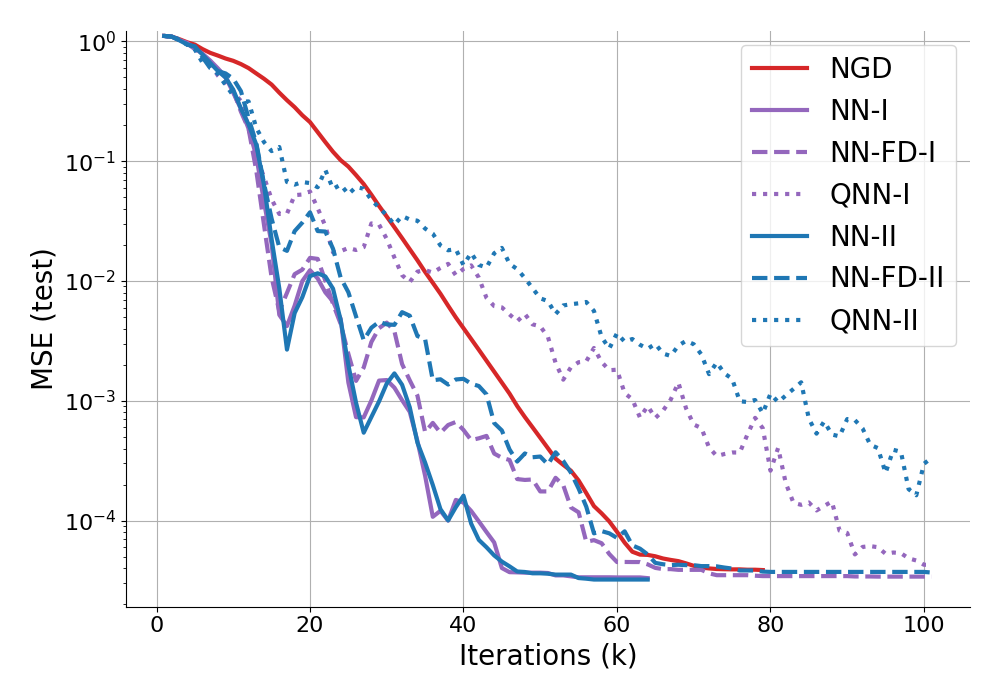}
    \includegraphics[width=0.45\linewidth]{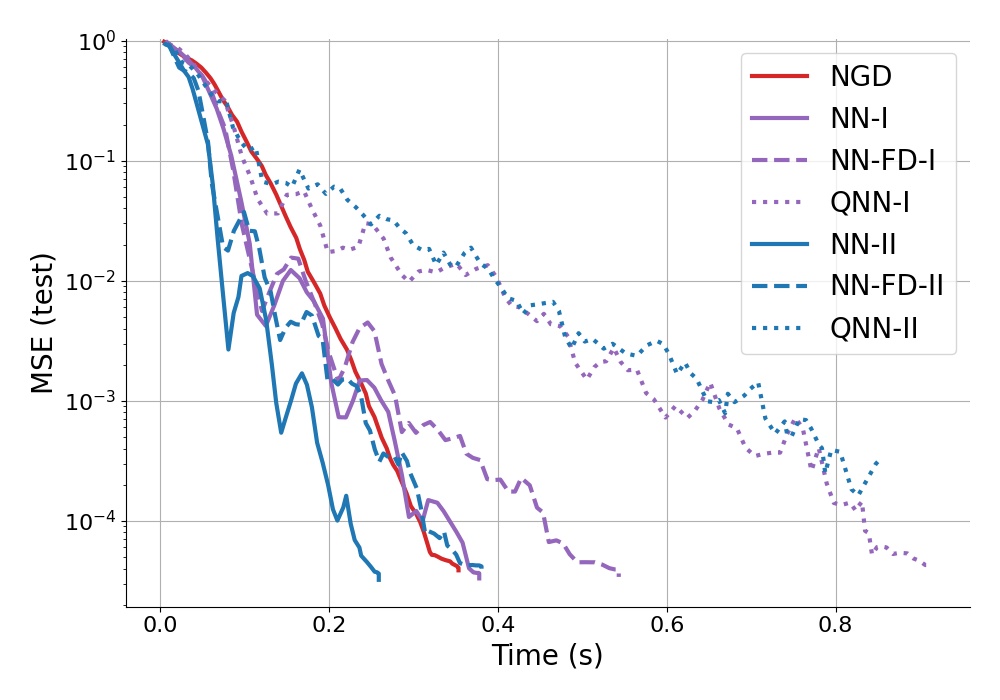}
    \caption{Convergence trends for the Mackey-Glass problem as a function of the number of iterations (left) or the time in seconds (right). In (red) the baseline method NGD. In (purple) the NN-I (solid line), NN-I-FD (dashed line), QNN-I (dotted-line). In (blue) the NN-II (solid line), NN-II-FD (dashed line), QNN-II (dotted-line).
    The curves represent the average of $50$ repetitions at random initializations. We see that all strategies behave similarly with NN-I alternative clearly more computationaly expensive.}
    \label{fig:makey-nesterov}
\end{figure}

\begin{figure}[ht]
    \centering
    \includegraphics[width=0.45\linewidth]{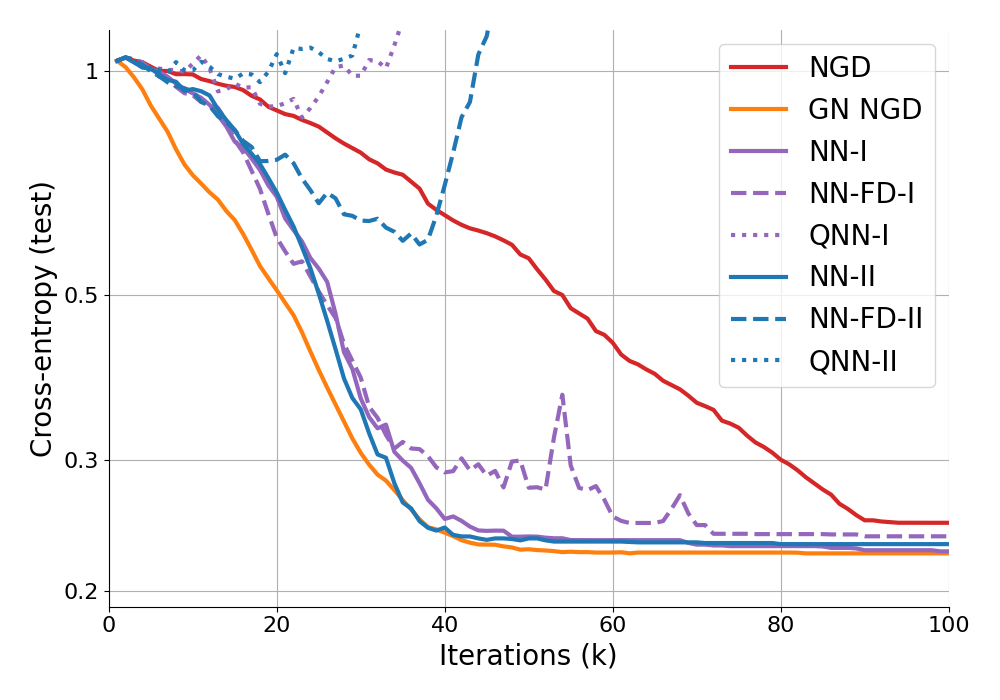}
    \includegraphics[width=0.45\linewidth]{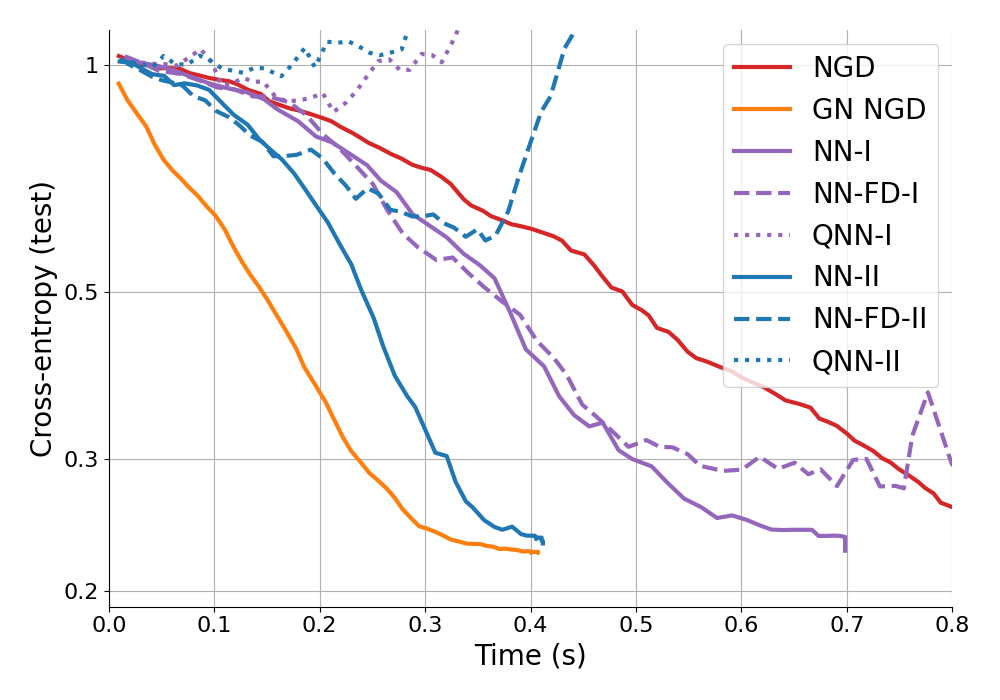}
    \caption{Convergence trends for the classification problem as a function of the number of iterations (left) or the time in seconds (right). The baseline methods are NGD (red) and Gauss-Newton NGD (GN-NGD) (orange). In (purple) the NN-I (solid line), NN-I-FD (dashed line), QNN-I (dotted-line). In (blue) the NN-II (solid line), NN-II-FD (dashed line), QNN-II (dotted-line).
    The curves represent the average of $50$ repetitions at random initializations. We see that all strategies behave similarly with NN-I alternative clearly more computationaly expensive.}
    \label{fig:or-nesterov}
\end{figure}

\begin{figure}[ht]
    \centering
    \includegraphics[width=0.45\linewidth]{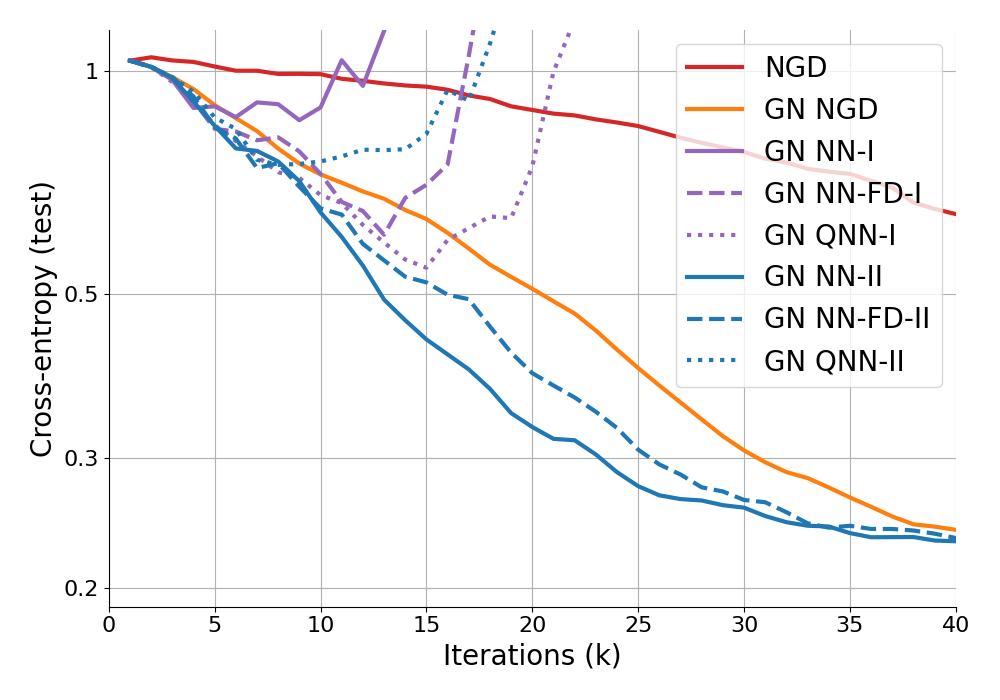}
    \includegraphics[width=0.45\linewidth]{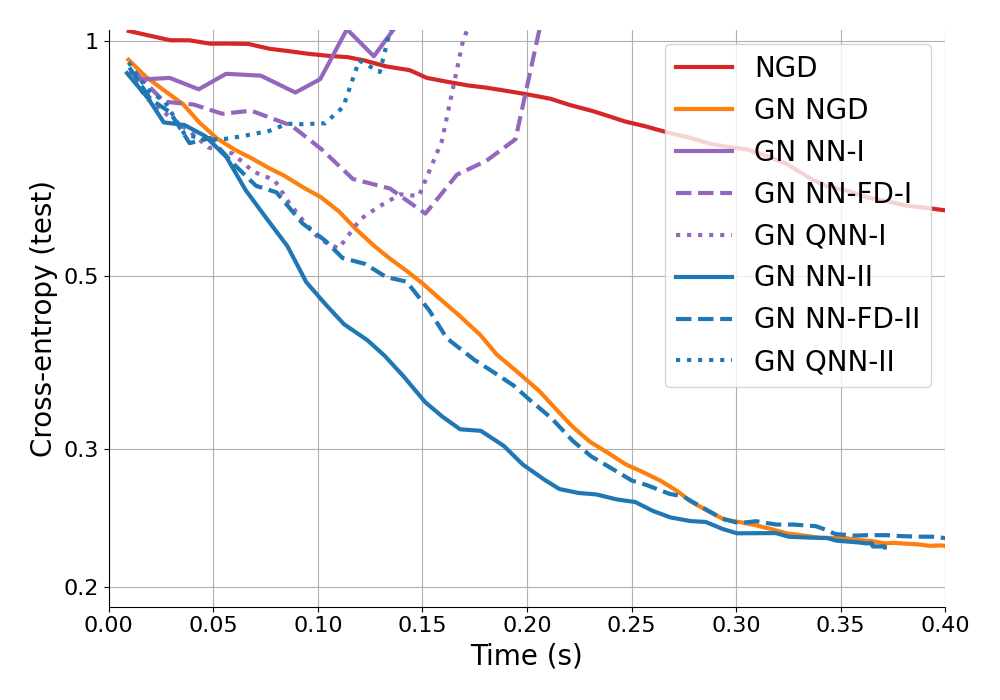}
    \caption{Convergence trends for the classification problem as a function of the number of iterations (left) or the time in seconds (right). The baseline methods are NGD (red) and Gauss-Newton NGD (GN-NGD) (orange). In (purple) the GN NN-I (solid line), GN NN-I-FD (dashed line), GN QNN-I (dotted-line). In (blue) the GN NN-II (solid line), GN NN-II-FD (dashed line), GN QNN-II (dotted-line). The curves represent the average of $50$ repetitions at random initializations. In this case the Gauss-Newton versions of NN-II all present divergent trends.}
    \label{fig:gn-or-nesterov}
\end{figure}


\end{document}